\documentclass[11pt,letterpaper]{article}

\usepackage[margin=1in]{geometry}        
\usepackage{setspace}                     
\usepackage{booktabs}                      
\usepackage[T1]{fontenc}                  
\usepackage[utf8]{inputenc}               
\usepackage{lmodern} 
\usepackage{graphicx}
\usepackage{natbib}
\usepackage{authblk}
\usepackage{enumerate}
\usepackage{enumitem}

\usepackage{amsmath,amssymb,amsthm,mathtools,bbm}

\numberwithin{equation}{section}

\theoremstyle{plain}
\newtheorem{theorem}{Theorem}
\newtheorem{lemma}[theorem]{Lemma}
\newtheorem{proposition}[theorem]{Proposition}

\theoremstyle{definition}
\newtheorem{definition}{Definition}
\newtheorem{assumption}{Assumption}

\usepackage[ruled, linesnumbered]{algorithm2e}
\usepackage{xcolor}

\usepackage[
  colorlinks,
  linkcolor=blue,
  citecolor=blue,
  urlcolor=blue,
  breaklinks
]{hyperref}

\def\E{\mathbb{E}}
\def\P{\mathbb{P}}
\def\R{\mathbb{R}}
\def\N{\mathbb{N}}
\def\Xcal{{\mathcal{X}}}
\newcommand{\bx}{\mathbf{x}}
\newcommand{\bX}{\mathbf{X}}
\newcommand{\bY}{\mathbf{Y}}
\newcommand{\bt}{\mathbf{t}}
\newcommand{\bxi}{\boldsymbol{\xi}}

\newcommand{\bv}{\mathbf{v}}
\def\Hcal{\mathcal{H}}
\def\Ccal{\mathcal{C}}
\def\Pcal{\mathcal{P}}

\newcommand{\Pit}{\Pi_{\bt}}
\newcommand{\One}[1]{{\mathbbm{1}}\left\{{#1}\right\}}
\newcommand{\one}[1]{{\mathbbm{1}}_{{#1}}}
\def\conch{{\textsc{CONCH}}}
\def\clocc{{\textsc{CLOCC}}}

\newcommand{\dfucb}{\textsc{DF-UCB}\,}
\newcommand{\dflcb}{\textsc{DF-LCB}\,}

\usepackage[nameinlink,noabbrev]{cleveref}
\crefname{assumption}{assumption}{assumptions}

\newcommand{\papertitle}{Distribution-free inference on the number of changepoints}
\newcommand{\paperauthorA}{Rohan Hore}
\newcommand{\paperauthorB}{Aaditya Ramdas}
\newcommand{\affilOne}{Department of Statistics, Stanford University}

\newcommand{\corrEmail}{rhore@stanford.edu}
\newcommand{\keywordslist}{Changepoint count, number of changepoints, conformal inference, conformal p-values,\\ distribution-free inference.}

\title{\papertitle}

\author{\paperauthorA\thanks{Corresponding author: \corrEmail}
}
\author{\paperauthorB}
\affil{\affilOne}

\date{\today}

\newcommand{\paperabstract}[1]{%
  \begin{abstract}
    #1
  \end{abstract}
}
\newcommand{\paperkeywords}[1]{%
  \vspace{0.5em}
  \noindent\textbf{Keywords: } #1
}

\usepackage[nameinlink,noabbrev]{cleveref}

\begin{document}
\maketitle
\paperabstract{
Suppose we are given an ordered sequence of independent data whose distribution changes $K$ times at unknown locations, for some unknown $K \geq 0$. In this paper, we study the problem of performing distribution-free inference on $K$. First, we show an impossibility result: any distribution-free upper confidence bound on $K$ must be trivial and uninformative. Then, using conformal $p$-values, and under only the assumption that the data segments induced by the changepoints are exchangeable (within themselves) and mutually independent, we construct a finite-sample valid lower confidence bound on $K$, which we call the Conformal LOwer bound on Changepoint Count (CLOCC). We show that CLOCC is the \emph{only} feasible way to provide a  lower bound on $K$ under the stated assumptions, a property we refer to as  its universality. We provide practical guidelines for choosing score functions that yield efficient and tight lower bounds.
We evaluate CLOCC in several synthetic and real-data experiments, where it provides informative lower bounds on $K$, demonstrating its practical applicability.
}

\paperkeywords{\keywordslist}

\section{Introduction}

In this paper, we study the problem of distribution-free inference on the number of changepoints in an ordered data sequence. The changepoint count is an important modeling parameter. First, in real-world applications, both the number and locations of the changes are typically unknown. Therefore, estimating the full changepoint configuration can be statistically and computationally challenging. Knowledge of the changepoint count $K$ considerably simplifies this problem, since one then only needs to identify the most appropriate $(K+1)$-segmentation of the timeline. Second, $K$ is itself a quantitative summary of the complexity of the underlying data-generating mechanism: larger values of $K$ indicate that more distinct regimes are needed to explain the observed sequence. Thus, reliable inference on $K$ can directly inform appropriate modeling assumptions. Finally, in some applications \citep{zhao2025gaussian,vagnoli2018ensemble}, an unexpectedly large changepoint count can be treated as an early indication of failure in the underlying system. Given this practical relevance of the changepoint count, we therefore aim to quantify uncertainty about $K$ in a distribution-free and finite-sample valid manner, without imposing restrictive modeling assumptions or relying on large-sample approximations.

\subsection{Problem setting}

Suppose that for some $n\in \N$, we observe a sequence of $\mathcal{X}$-valued random variables $\bX:=(X_1,\ldots,X_n)$. Likewise, we use $\bx=(x_1,\ldots,x_n)$ to denote a generic element of $\mathcal{X}^n$. We assume that there exist $K\in\{0,\ldots,n-1\}$ unknown changepoints (so that there is at least one changepoint) at locations $\xi_0:=0<\xi_1<\xi_2<\cdots<\xi_K<n=:\xi_{K+1}$ such that
\[
(X_{\xi_{k-1}+1},\ldots,X_{\xi_k})\sim\Pcal^{(k)},\qquad\text{for each } k\in[K+1].
\]
Throughout this paper, we write $[k]$ to denote the collection $\{1,\ldots,k\}$ for $k\in \N$, and we write $\bxi=(\xi_1,\ldots,\xi_K)$ for the vector of true changepoint locations. Here, $K$ denotes the number of changepoints and is our target quantity, while for each $k\in[K+1]$, $\Pcal^{(k)}$, supported on $\mathcal{X}^{\xi_k-\xi_{k-1}}$, denotes the distribution of the $k$th segment. Here  Throughout, we assume that each $\xi_k$ is a genuine distributional change, in the sense that the two adjacent segments $(X_{\xi_{k-1}+1},\ldots,X_{\xi_k})$ and $(X_{\xi_{k}+1},\ldots,X_{\xi_{k+1}})$  cannot be merged into a single exchangeable segment. We write the corresponding joint distribution of $\bX$ as $\Pcal=\prod_{k=1}^{K+1}\Pcal^{(k)}$. In line with the distribution-free perspective, we impose no structural assumptions on $\Pcal^{(1)},\ldots,\Pcal^{(K+1)}$ beyond the following.

\begin{assumption}\label{ass:exchangeability}
For each $k\in[K+1]$, the segment $(X_{\xi_{k-1}+1},\ldots,X_{\xi_k})$ is exchangeable; that is, for any permutation $\pi:\{\xi_{k-1}+1,\ldots,\xi_k\}\mapsto\{\xi_{k-1}+1,\ldots,\xi_k\}$,
\[
(X_{\xi_{k-1}+1},\ldots,X_{\xi_k})\overset{d}{=}(X_{\pi(\xi_{k-1}+1)},\ldots,X_{\pi(\xi_k)}).
\]
Also, if $K \geq 1$, the segments $(X_{\xi_{k-1}+1},\ldots,X_{\xi_k})$ are mutually independent across $k\in[K+1]$. 
\end{assumption}
Note that when $K=0$, this assumption simply states that the whole data sequence $\bX$ is exchangeable. In general, for a simpler interpretation, one may consider the canonical setting, which we call the \emph{piecewise i.i.d.\ model}, where observations within the $k$th segment are i.i.d.\ from some distribution $P_k$, with $P_k\neq P_{k+1}$ for $k\in [K]$.

It is worth emphasizing that, apart from segmentwise exchangeability and mutual independence, we assume no further knowledge of the underlying distributions. In particular, we impose no additional structure on the observation space $\mathcal{X}$ or on the segmentwise distributions $\Pcal^{(1)},\ldots,\Pcal^{(K+1)}$, and, more importantly, we do not know either the number of changepoints or their true locations. This distributional agnosticism presents the main challenge in drawing meaningful and statistically valid inference on $K$.

\subsection{The goal}
\label{sec:defn_confidence_bounds}

We seek lower and upper bounds on $K$ with provable confidence guarantees. For each $K\in\{1,\ldots,n-1\}$, let $\mathfrak{P}_{K}$ denote the collection of joint distributions of $\bX$ that satisfy \Cref{ass:exchangeability} for some changepoint locations $0<\xi_1<\xi_2<\cdots<\xi_K<n$. Similarly, $\mathfrak{P}_0$ be the collection of joint distributions of $\bX$ such that $\bX$ is exchangaeble.

\begin{definition}[Distribution-free confidence bounds]\label{def:DF_bounds}
For any $\alpha\in(0,1)$, a statistic $\hat{L}_\alpha:\mathcal{X}^n\times[0,1]\mapsto\{0,\ldots,n-1\}$ is said to form a distribution-free lower confidence bound (\dflcb) on $K$ at level $1-\alpha$ if, for every $K\in\{0,\ldots,n-1\}$ and every $P\in\mathfrak{P}_K$,
\[
\P_{(\bX,\zeta)\sim P\times\textnormal{Unif}[0,1]}\bigl(K\ge\hat{L}_\alpha(\bX,\zeta)\bigr)\ge1-\alpha.
\]
Similarly, a statistic $\hat{U}_\alpha:\mathcal{X}^n\times[0,1]\mapsto\{0,\ldots,n-1\}$ is said to form a distribution-free upper confidence bound (\dfucb) on $K$ at level $1-\alpha$ if, for every $K\in\{0,\ldots,n-1\}$ and every $P\in\mathfrak{P}_K$,
\[
\P_{(\bX,\zeta)\sim P\times\textnormal{Unif}[0,1]}\bigl(K\le\hat{U}_\alpha(\bX,\zeta)\bigr)\ge1-\alpha.
\]
\end{definition}

In these definitions, the statistics $\hat{L}_\alpha$ and $\hat{U}_\alpha$ are functions of the observed data $\bX$, together with a random seed $\zeta\sim\textnormal{Unif}[0,1]$ that allows for internal randomization in their construction, if desired.

\subsection{Related work}\label{sec:review}

Retrospective changepoint analysis has a long history in statistics and signal processing; see \citet{truong2020selective,duggins2010parametric,niu2016multiple} for surveys. Broadly, one may distinguish three related problems. First, \emph{changepoint detection} asks whether \emph{any} distributional change is present. Second, when the number of changepoints is known, \emph{changepoint localization} aims to estimate their locations. Third, when both the number and locations of the changepoints are unknown, they must be estimated jointly.


\medskip\noindent\textbf{Changepoint detection methods.}
Changepoint detection is a classical and well-studied problem. CUSUM \citep{page1955test} is one of the earliest procedures for detecting a single changepoint and has inspired a large subsequent literature. Modern methods also address multiple changepoints and often provide estimates of their locations; examples include ordered multiple-changepoint tests \citep{aly2003tests}, kernel changepoint procedures (KCP) \citep{arlot2019kernel}, ChangeForest \citep{londschien2023random}, and graph-based methods \citep{chu2019asymptotic}. Separately, conformal martingale methods \citep{vovk2003testing,volkhonskiy2017inductive,shin2022detectors} provide powerful tools for online changepoint detection.

\medskip\noindent\textbf{Segmentation-based approaches.}
A common strategy for multiple-changepoint problems is to partition the timeline into approximately homogeneous segments and estimate the corresponding segment boundaries. These methods either optimize a global segmentation criterion, or take a recursive approach by repeatedly isolating individual changes. Among them, wild binary segmentation \citep{fryzlewicz2014wild}, and isolate-detect procedures \citep{anastasiou2022detecting} are particularly popular. For further examples, see \citet{fang2020segmentation,maidstone2017optimal}.

\medskip\noindent\textbf{Parametric localization methods.}
A substantial literature studies changepoint estimation under parametric assumptions. Classical likelihood-based methods derive estimators and uncertainty quantification under specific models; see, for example, \citet{kim1989likelihood,quandt1958estimation} for a single changepoint. In the multiple-changepoint setting, \citet{bai1998estimating,bai2003computation} developed estimation and testing procedures for multiple structural changes in linear models.

\medskip\noindent\textbf{Resampling-based approaches.}
Bootstrap and related resampling methods \citep{pettitt1979non,ross2012two} have also been used to quantify uncertainty in estimated changepoint locations. \citet{cho2022bootstrap} develop bootstrap intervals for multiple changepoints. These typically lack finite sample guarantees and are computationally intensive.

\medskip\noindent\textbf{Changepoint count estimation.}
\citet{yao1988estimating} proposed estimating the changepoint count using the Schwarz criterion. More generally, since the changepoint count determines the complexity of a segmentation, many approaches estimate it by optimizing a likelihood or contrast together with a complexity penalty \citep{lavielle2005using,killick2012optimal,harchaoui2010multiple}. These penalties are closely related to classical model-selection criteria such as AIC and BIC.
Focusing specifically on the distribution-free objective of our work, conformal prediction, originally introduced by \citet{vovk1999machine,shafer2008tutorial}, provides a general framework for distribution-free predictive inference. 

\medskip\noindent\textbf{Conformal approaches to changepoint localization.}
In the single-changepoint setting, \citet{dandapanthula2025conformal} introduced MCP localization, while \citet{hore2026conformal} proposed \conch{}, both of which construct confidence sets for the changepoint location with user-specified coverage. Building on \conch{}, \citet{hore2026distribution} studied a multi-stream setting and introduced \textsc{CROC}, which constructs a confidence set for the stream exhibiting the earliest change, viewed as a proxy for the ``root stream.'' \citet{yu2026conformal} recently extended \conch{} and \textsc{CROC} to settings with corrupted observations.

\subsection{Our approach}
We tackle the distribution-free inference on $K$ in two parts: first, by investigating a \dfucb on $K$, and then a \dflcb.

\medskip
\noindent\textbf{Impossibility of \dfucb:} We first show that any valid \dfucb on $K$ must be trivial and non-informative, in the sense that a \dfucb valid at confidence level $1-\alpha$ takes the maximum value $n-1$ with probability at least $1-\alpha$. 

\medskip
\noindent\textbf{An efficient \dflcb construction:}
We revisit the \conch{} framework of \cite{hore2026conformal}, which, in the setting $K=1$, constructs a distribution-free confidence set for the single changepoint $\xi_1\in[n-1]$. We then adapt its conformal $p$-value construction to build a \dflcb on the number of changepoints $K$. We call the resulting method Conformal LOwer bound on Changepoint Count (\clocc), which 
enjoys two important guarantees:

\begin{enumerate}
    \item \textbf{Finite-sample validity.} For any sample size $n$, any data-generating distribution $\Pcal$ satisfying \Cref{ass:exchangeability}, and any user-specified confidence level $1-\alpha\in(0,1)$, \clocc{} lower bounds the true changepoint count $K$ with probability at least $1-\alpha$.

    \item \textbf{Universality.} \clocc{} is universal for distribution-free lower confidence bounds on the changepoint count: any valid \dflcb can be represented as an instance of the \clocc{} framework through an appropriate choice of score function.

\end{enumerate}

Beyond these guarantees, we develop several practical variants of \clocc{} aimed at improving its practicality, and provide guidance for constructing informative CPP scores that can yield substantially tighter lower bounds in practice.

\paragraph{Organization of the paper.} The rest of the paper is organized as follows. In Section~\ref{sec:ucb_impossibility}, we prove the impossibility of a non-trivial \dfucb construction. Then, in Section~\ref{sec:method}, we formally introduce the \clocc{} algorithm, with its more practical variants presented in Section~\ref{sec:practical_clocc}. Next, in Section~\ref{sec:CPP_score}, we provide practical and theoretical guidelines for constructing efficient CPP scores, and in Section~\ref{sec:clocc_seg}, we introduce a computationally efficient construction of the \clocc{} \dflcb. Finally, in Section~\ref{sec:experiments}, we empirically evaluate the proposed \dflcb in several synthetic and real-data experiments.

\section{Non-trivial DF-UCB on $K$ is impossible}
\label{sec:ucb_impossibility}

Although the definitions of a \dfucb and a \dflcb look symmetric at first glance, they are fundamentally different. Put informally, when there is a substantial change along the observed data stream, one may be able to confidently detect such a change; a \dflcb essentially answers how many such changes can be identified with confidence. In contrast, no matter how similar two adjacent data points may look, one cannot confidently assert that they arise from the same distribution without imposing further distributional structure. Thus, any valid \dfucb must always be trivial in the following sense.

\begin{theorem}\label{thm:ucb_impossibility}
Fix $\alpha\in(0,1)$ and $n\in\N$. Suppose $\Xcal$ contains at least two points. Let $\hat{U}_\alpha:\Xcal^n\times[0,1]\to\{0,\ldots,n-1\}$ be a valid DF-UCB on the changepoint count $K$ at level $1-\alpha$. Then, for every fixed $\bx\in\Xcal^n$,
\[
\P\left(\hat{U}_\alpha(\bx,\zeta)=n-1\right)\ge1-\alpha,
\]
where the probability is taken over the randomness of $\zeta$.
\end{theorem}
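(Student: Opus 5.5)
Fix $\bx=(x_1,\ldots,x_n)\in\Xcal^n$. The plan is to exhibit a distribution $P\in\mathfrak{P}_{n-1}$ under which $\bX=\bx$ almost surely. The validity of $\hat U_\alpha$ at $P$ then forces the claimed bound at this particular $\bx$. The natural candidate is the point-mass product $P=\prod_{i=1}^n\delta_{x_i}$, which makes every observation its own segment, so $\xi_k=k$ for $k\in[n-1]$ and $K=n-1$. Single-element segments are trivially exchangeable, and point masses are independent, so \Cref{ass:exchangeability} holds with $K=n-1$.

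The obstacle is the requirement that each changepoint be genuine. If $x_k=x_{k+1}$, the adjacent segments $\delta_{x_k}$ and $\delta_{x_{k+1}}$ can be merged into a single exchangeable segment. The pure point-mass product would then not lie in $\mathfrak{P}_{n-1}$. This is the only place the hypothesis that $\Xcal$ contains two points $a\neq b$ enters.

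To handle it, I will perturb the construction. Fix $\epsilon\in(0,1)$ and let $X_i$ be independent with law $P_i^\epsilon=(1-\epsilon)\delta_{x_i}+\epsilon\,\mu_i$. Here $\mu_i\in\{\delta_a,\delta_b\}$ and the contaminants are chosen so that consecutive marginals $P_i^\epsilon$ and $P_{i+1}^\epsilon$ differ. Concretely, if $x_i\ne x_{i+1}$ the laws already differ for small $\epsilon$. If $x_i=x_{i+1}$, choose $\mu_i\neq\mu_{i+1}$ by alternating. Both cases are covered by assigning $\mu_i$ to alternate between $\delta_a$ and $\delta_b$ along $i$ and checking that the two atoms' weights differ; a short case check handles $x_i\in\{a,b\}$. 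Each singleton segment is exchangeable and the segments are independent. Adjacent singletons with different marginals cannot be merged into an exchangeable pair, because exchangeability of $(X_i,X_{i+1})$ would force equal marginals. Hence $P^\epsilon\in\mathfrak{P}_{n-1}$.

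By validity, $1-\alpha\le\P_{P^\epsilon\times\mathrm{Unif}}(\hat U_\alpha(\bX,\zeta)\ge n-1)$, and since $\hat U_\alpha\le n-1$ this is the probability that $\hat U_\alpha=n-1$. Condition on $\bX$. On the event $\{\bX=\bx\}$, which has probability at least $(1-\epsilon)^n$, the conditional probability is $g(\bx):=\P_\zeta(\hat U_\alpha(\bx,\zeta)=n-1)$, since $\zeta$ is independent of $\bX$. Off that event it is at most $1$. Therefore
\[
1-\alpha\le g(\bx)+\bigl(1-(1-\epsilon)^n\bigr).
\]
Letting $\epsilon\to0$ gives $g(\bx)\ge1-\alpha$, and since $\bx$ was arbitrary the theorem follows.
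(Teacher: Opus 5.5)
Your proposal is correct and follows essentially the same route as the paper. Both arguments perturb the point masses $\delta_{x_i}$ by a small contamination so that adjacent marginals differ, which places the product in $\mathfrak{P}_{n-1}$, and then apply validity on the event $\{\bX=\bx\}$ before letting the contamination vanish. The differences are cosmetic: the paper uses distinct weights $\eta_i$ with a per-coordinate contaminant $x_i'\neq x_i$ instead of your common $\epsilon$ with alternating contaminants $a,b$, and it divides by $\P(\bX=\bx)$ rather than using your slightly cleaner additive bound.
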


The proof of this result is deferred to the appendix. On the other hand, building a non-trivial \dflcb is quite possible. For instance, given $n$ data points, split the sequence into approximately two halves, $(X_1,\ldots,X_{\lfloor n/2\rfloor})$ and $(X_{\lfloor n/2\rfloor+1},\ldots,X_n)$. We can run a distribution-free algorithm to test exchangeability separately on each of these two segments, controlling the Type~I error at level $\alpha/2$ for each test. Then, the number of tests that result in a rejection gives a non-trivial lower bound on the changepoint count.

While this provides a valid \dflcb, it is clearly inefficient, in the sense that it neither uses the Type~I error budget efficiently nor yields a tight lower bound. In later sections, we therefore improve upon this naive idea and focus on constructing efficient \dflcb on the changepoint count $K$.

\section{Conformal lower bound on changepoint count}\label{sec:method}

In this section, we construct a \dflcb on $K$, building upon the Conformal Changepoint Localization (\conch) algorithm introduced in \cite{hore2026conformal}. When there is a single changepoint in the data stream, \conch{} uses conformal $p$-values to construct a distribution-free confidence set for its location. Before introducing our construction, we briefly review the key components and the underlying ideas of \conch.

\subsection{Background:  \conch{} }
\label{sec:conch_review}

Suppose there is only a single changepoint $\xi\in[n-1]$, that is, $\bX\sim P$ for some $P\in\mathfrak{P}_1$. The goal of \conch{} is to localize the changepoint $\xi$ by returning a distribution-free confidence set for $\xi$. The \conch{} framework is built upon two key components: (1) a changepoint plausibility score and (2) a split-permutation group.

Any mapping $S:\mathcal{X}^n\times[n-1]\to\R$ is called a changepoint plausibility score, where $S(\bx,t)$ quantifies the plausibility that $t$ is the true changepoint location given a data sequence $\bx\in\Xcal^n$; a higher score indicates greater evidence that $t$ is the true changepoint.

Next, for each candidate location $t\in[n-1]$, they define the split-permutation group by
\[
\Pi_t:=\left\{\pi\in\mathcal S_n:\pi([t])=[t],\;\pi([n]\setminus[t])=[n]\setminus[t]\right\},
\]
i.e., the set of permutations that independently permute the indices to the left and right of $t$, without mixing them.

Given these two components, for each candidate changepoint index $t\in[n-1]$, \conch{} constructs the $p$-value
\[
p_t=\frac{1}{|\Pi_t|}\sum_{\pi\in\Pi_t}
\One{S(\pi(\bX),t)\le S(\bX,t)}.
\]
The $p$-value $p_t$ quantifies the evidence in $\bX$ against the null hypothesis $\tilde{\mathcal{H}}_{0,t}:\xi=t$, which posits that $t$ is the true changepoint location.

Under $\tilde{\mathcal{H}}_{0,t}$, the subsequence to the left of $t$ is exchangeable, and similarly the subsequence to the right of $t$ is exchangeable. Consequently, $p_t$ is super-uniform. Finally, by inverting the tests for $\tilde{\mathcal{H}}_{0,t}$ over $t\in[n-1]$, \conch{} produces
\[
\mathcal{C}_{n,1-\alpha}^{\conch}:=\{t\in[n-1]:p_t>\alpha\}
\]
as a distribution-free confidence set for the true changepoint location $\xi$.

While \conch{} provides a principled framework for localizing a single changepoint, inference on the number of changepoints presents two additional challenges. First, there may be multiple changepoints, and second, we do not know how many changepoints are present. In the following parts, we describe how to address both of these challenges.

\subsection{CLOCC: a conformal framework for \dflcb on $K$}
\label{sec:clocc_defn}

We now introduce Conformal LOwer bound on Changepoint Count (\clocc), which provides a valid \dflcb on $K$.
As mentioned earlier, we want to adapt \conch{} to a setting that allows multiple changepoints, with the number of changepoints itself being unknown. Therefore, we begin by generalizing the two core components of \conch{} as follows.

\medskip
\noindent\textbf{(1) ChangePoint Plausibility (CPP) score:}
For each $k\in[n-1]$, let
\[
\mathcal{T}_k:=\{(t_1,\ldots,t_k):1\le t_1<\cdots<t_k\le n-1\}
\]
denote the set of all possible $(k+1)$-segmentations of the full timeline $[n]$. Additionally, we define $\mathcal{T}_0:=\emptyset$ to indicate the case when there are no changepoints.

We write $\bar{\R}$ to denote the extended real, $\R\cup\{\infty,-\infty\}$, and call any mapping $S:\mathcal{X}^n\times\bigcup_{k=0}^{n-1}\mathcal{T}_k\to\bar{\R}$ a changepoint plausibility score function. In principle, for $\bt=(t_1,\ldots,t_k)\in\mathcal{T}_k$, the value $S(\bx,\bt)$ quantifies the plausibility that $\bt$ explains all the changepoint locations based on the observed data sequence $\bx$. Similarly, $S(\bx,\emptyset)$ quantifies the plausibility that there is no changepoint.

\medskip
\noindent\textbf{(2) Split-permutation group:}
For a fixed $k$-tuple $\bt=(t_1,\ldots,t_k)\in\mathcal{T}_k$, let us denote the induced $k+1$ segments by
\[
I_0=[1,t_1],\qquad I_\ell=[t_\ell+1,t_{\ell+1}] \ \text{for }\ell\in[k-1], \qquad I_k=[t_k+1,n].
\]
Let $\Pi_{\bt}$ denote the group of permutations of $[n]$ that independently permute within each segment $I_0,I_1,\ldots,I_k$, without mixing indices across these segments. Formally,
\[
\Pi_{\bt}=\left\{\pi\in\mathcal{S}_n:\text{ for every }t\in[n],\  t\in I_\ell \text{ implies }\pi(t)\in I_\ell \text{ for some }\ell\in\{0,\ldots,k\} \right\}.
\]
Further, for notational consistency, we write $\Pi_{\emptyset}$ to denote the full permutation group $\mathcal{S}_n$.

Given these two components, we can now follow the original \conch{} construction. For each $k\in \{0,1,\ldots,n-1\}$ and each $\bt\in\mathcal{T}_k$, define the conformal $p$-value
\begin{equation}
\label{eq:pvalue_conch_multi}
p_{\bt}:=\frac{1}{|\Pi_{\bt}|}\sum_{\pi\in\Pi_{\bt}}\One{S(\pi(\bX),\bt)\le S(\bX,\bt)}.
\end{equation}

For $k\in [n-1]$ and $\bt\in \mathcal{T}_k$, let $\tilde{\mathcal{H}}_{0,\bt}:\bxi=\bt$ denote the null hypothesis that there is at least a change and $\bt$ contains all the true changepoint locations. Then, $p_{\bt}$ quantifies the evidence in $\bX$ against $\tilde{\mathcal{H}}_{0,\bt}$. Under this null, each of the data segments induced by $\bt$ is exchangeable, and these segments are mutually independent. Consequently, the permutation construction above yields a $p$-value.

\begin{lemma}
\label{lem:validity_of_conch_multi}
Fix $\bt\in\bigcup_{k=1}^{n-1}\mathcal{T}_k$ and $\alpha\in(0,1)$. Under the null $\tilde{\mathcal{H}}_{0,\bt}$, the $p$-value $p_{\bt}$ in \eqref{eq:pvalue_conch_multi} satisfies
\[
\P\left(p_{\bt}\le\alpha\right)\le\alpha.
\]
\end{lemma}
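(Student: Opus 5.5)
The plan is the standard permutation-test argument: first establish invariance of $\bX$ under $\Pit$, then run a counting argument over the group. Under $\tilde{\mathcal{H}}_{0,\bt}$ the true changepoints are exactly $\bt$, so \Cref{ass:exchangeability} applies with $\bxi=\bt$. Each segment $(X_i)_{i\in I_\ell}$ is exchangeable, and the segments are mutually independent.

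\textbf{Step 1 (invariance).} Fix $\sigma\in\Pit$. Write $\sigma=\sigma_0\circ\cdots\circ\sigma_k$, where each $\sigma_\ell$ permutes only $I_\ell$. Apply exchangeability within each segment and combine the pieces using independence, since the joint law is the product of the segment laws. This gives $\sigma(\bX)\overset{d}{=}\bX$ for every $\sigma\in\Pit$.

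\textbf{Step 2 (random group element).} Draw $\sigma$ uniformly from $\Pit$, independent of $\bX$. By Step 1, $\bX\overset{d}{=}\sigma(\bX)$. Because $\Pit$ is a group, the multiset $\{\pi\sigma(\bX):\pi\in\Pit\}$ equals $\{\pi(\bX):\pi\in\Pit\}$. For this step I write $T(\bx):=S(\bx,\bt)$ and define the orbit quantity
\[
F(\bx,y):=\frac{1}{|\Pit|}\sum_{\pi\in\Pit}\One{T(\pi(\bx))\le y},
\]
so that $p_{\bt}=F(\bX,T(\bX))$. We then have
\[
p_{\bt}\overset{d}{=}F(\sigma(\bX),T(\sigma(\bX)))=F(\bX,T(\sigma(\bX))).
\]

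\textbf{Step 3 (conditional super-uniformity).} Condition on $\bX$. The value $T(\sigma(\bX))$ is then a uniform draw from the multiset $\{T(\pi(\bX))\}_{\pi\in\Pit}$, and $F(\bX,\cdot)$ is the empirical CDF of that multiset. A uniform draw $V$ from a finite multiset with empirical CDF $G$ satisfies $\P(G(V)\le\alpha)\le\alpha$. To see this, let $v^*$ be the largest value with $G(v^*)\le\alpha$. The event $\{G(V)\le\alpha\}$ coincides with $\{V\le v^*\}$, which has probability $G(v^*)\le\alpha$. Ties and the values $\pm\infty$ in $\bar{\R}$ cause no trouble, because the argument only uses the order. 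Taking expectation over $\bX$ gives $\P(p_{\bt}\le\alpha)\le\alpha$.

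The only delicate point is Step 1: we need the distributional identity for the whole product permutation, not just for each segment separately. This is exactly where mutual independence of the segments is needed, since each $\sigma_\ell$ acts on its own independent block. The rest is the routine conformal counting argument.
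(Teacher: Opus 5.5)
Your proposal is correct and follows essentially the same route as the paper. The shared ingredients are the invariance $\pi(\bX)\overset{d}{=}\bX$ for all $\pi\in\Pi_{\bt}$ under the null, averaging over the group (your uniform $\sigma$ plays the role of the paper's $\frac{1}{|\Pi_{\bt}|}\sum_{\pi\in\Pi_{\bt}}$), the orbit identity $\pi\circ\Pi_{\bt}=\Pi_{\bt}$, and a deterministic counting inequality. You also make explicit two steps the paper takes for granted: deriving the invariance from within-segment exchangeability plus independence, and proving the rank inequality via the largest value $v^*$.
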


On the other hand, $\tilde\Hcal_{0,\emptyset}$ only posits that the whole data sequence $\bX$ is exchangeable and there is no changepoint. Therefore, to test this null, we may simply define the conformal $p$-value
\begin{equation}\label{eq:pvalue_test_exchangeability}
   p_{\emptyset}:=\frac{1}{|\Pi_{\emptyset}|}\sum_{\pi\in\Pi_{\emptyset}}\One{S(\pi(\bX),\emptyset)\le S(\bX,\emptyset)}.
\end{equation}
By an analogous logic, $p_{\emptyset}$ is a super-uniform random variable under $\tilde\Hcal_{0,\emptyset}$.

Recall, however, that our objective is to draw inference on the number of changepoints $K$. Recall that the hypothesis of interest is $\mathcal{H}_{0,k}:K=k$, which asserts that the true number of changepoints is exactly $k$. 
Since $\mathcal{H}_{0,k}=\bigcup_{\bt\in\mathcal{T}_k}\tilde{\mathcal{H}}_{0,\bt}$ for $k\in\{0,\ldots,n-1\}$,
we define
\begin{equation}
\label{eq:pvalue_clocc}
p_{(k)}:=\max_{\bt\in\mathcal{T}_k}p_{\bt}
\end{equation}
as a $p$-value for $\mathcal{H}_{0,k}$, quantifying the evidence against the hypothesis that there are exactly $k$ changepoints.
Finally, we invert these tests to construct the following lower confidence bound for the changepoint count $K$:
\begin{equation}
\label{eq:clocc_dflcb}
\hat{L}_{\alpha}^{\clocc}(\bX):=\min\left\{k\in\{0,1,\ldots,n-1\}:p_{(k)}>\alpha\right\}.
\end{equation}
We note that if $K=n-1$, then all the corresponding segments are singletons, and hence $p_{(n-1)}\equiv 1$. Consequently, the set $\{k\in\{0,\ldots,n-1\}:p_{(k)}>\alpha\}$ is non-empty, and $\hat{L}_{\alpha}^{\clocc}(\bX)$ is always well-defined.
The procedure is formally summarized in \Cref{alg:clocc}.
The resulting $\hat{L}_{\alpha}^{\clocc}$ is a valid \dflcb on $K$, as we formally establish below.

\IncMargin{1.2em}
\begin{algorithm}[t]
    \caption{\clocc{}: conformal lower bound on changepoint count}
    \label{alg:clocc}
    \KwIn{$(X_t)_{t=1}^n$ (data); $1-\alpha$ (target coverage);
    $S:\mathcal X^n\times\bigcup_{k=0}^{n-1}\mathcal T_k\to\overline{\R}$ (CPP score)}
    \KwOut{$\hat L_{\alpha}^{\clocc}$ (\dflcb on the changepoint count)}

    \For{$k\in\{0,\ldots,n-1\}$}{
        $p_{(k)}\gets 0$\;

        \ForEach{$\bt\in\mathcal T_k$}{
            Construct the split-permutation group $\Pi_{\bt}$\;

            $p_{\bt}\gets\frac{1}{|\Pi_{\bt}|}
            \sum_{\pi\in\Pi_{\bt}}
            \One{
                S(\pi(\bX),\bt)\le S(\bX,\bt)
            }$\;

            $p_{(k)}\gets\max\{p_{(k)},p_{\bt}\}$\;
        }

        \If{$p_{(k)}>\alpha$}{
            $\hat L_{\alpha}^{\clocc}\gets k$\;
            \Return{$\hat L_{\alpha}^{\clocc}$}\;
        }
    }
\end{algorithm}
\DecMargin{1.2em}

\begin{theorem}[Validity of \clocc]
\label{thm:validity_clocc}
Fix $k\in\{0,\ldots,n-1\}$ and $\alpha\in(0,1)$. Under the null $\mathcal{H}_{0,k}$, the $p$-value $p_{(k)}$ satisfies $\P(p_{(k)}\le\alpha)\le\alpha$. Consequently, $\hat{L}^{\clocc}_\alpha$ is a distribution-free lower confidence bound for $K$ at level $1-\alpha$:
\[
\P_{X\sim \Pcal}\left(\hat{L}^{\clocc}_\alpha(\bX)\le K\right)\ge1-\alpha \qquad \text{for all}~P\in \mathfrak{P}_K.
\]
\end{theorem}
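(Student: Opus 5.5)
The proof will combine the max-over-candidates structure of $p_{(k)}$ with \Cref{lem:validity_of_conch_multi}, and then pass from the test of $\mathcal{H}_{0,K}$ to the lower bound by a standard test-inversion argument.

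\textbf{Step 1 (super-uniformity of $p_{(k)}$).} Fix $k$ and suppose $\mathcal{H}_{0,k}$ holds, i.e.\ $\bX\sim P$ for some $P\in\mathfrak{P}_k$ with true changepoint vector $\bxi\in\mathcal{T}_k$; when $k=0$ we set $\bxi=\emptyset$. Then $\tilde{\mathcal{H}}_{0,\bxi}$ holds. Since $p_{(k)}=\max_{\bt\in\mathcal{T}_k}p_{\bt}\ge p_{\bxi}$, we have the event inclusion
\[
\{p_{(k)}\le\alpha\}\subseteq\{p_{\bxi}\le\alpha\}.
\]
Hence $\P(p_{(k)}\le\alpha)\le\P(p_{\bxi}\le\alpha)\le\alpha$. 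For $k\ge1$ the last inequality is \Cref{lem:validity_of_conch_multi}. For $k=0$ it is the super-uniformity of $p_\emptyset$ under exchangeability of $\bX$, noted after \eqref{eq:pvalue_test_exchangeability}. Note that no multiplicity correction is needed: although the maximum runs over many $\bt$, only the single true $\bxi$ has to be controlled.

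\textbf{Step 2 (supporting the lemma).} If I had to justify \Cref{lem:validity_of_conch_multi} myself, I would use the standard permutation argument. Under $\tilde{\mathcal{H}}_{0,\bt}$, segmentwise exchangeability together with independence across segments gives $\pi(\bX)\overset{d}{=}\bX$ for every $\pi\in\Pi_{\bt}$. Because $\Pi_{\bt}$ is a group, the multiset $\{S(\pi(\bX),\bt):\pi\in\Pi_{\bt}\}$ is invariant under replacing $\bX$ by $\sigma(\bX)$ for any $\sigma\in\Pi_{\bt}$. Drawing $\sigma$ uniformly from $\Pi_{\bt}$ independently of $\bX$, the observed score $S(\bX,\bt)$ is then distributed as a uniformly random element of this multiset. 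The usual rank argument, which handles ties through the ``$\le$'' in the definition, then gives $\P(p_{\bt}\le\alpha)\le\alpha$. This is the only genuinely probabilistic step, and it is where I expect the main care to be needed: the group structure, and the use of independence across segments to get the joint distributional invariance.

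\textbf{Step 3 (inversion).} Let $K$ be the true count and $P\in\mathfrak{P}_K$. By the definition \eqref{eq:clocc_dflcb}, if $p_{(K)}>\alpha$ then $K$ belongs to the set over which the minimum is taken, so $\hat{L}^{\clocc}_\alpha(\bX)\le K$. Therefore
\[
\P\bigl(\hat{L}^{\clocc}_\alpha(\bX)>K\bigr)\le\P(p_{(K)}\le\alpha)\le\alpha
\]
by Step 1 applied with $k=K$. Since $P$ was arbitrary, this holds for every $K$ and every $P\in\mathfrak{P}_K$, which is the claimed distribution-free guarantee.
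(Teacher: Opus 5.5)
Your proof is correct and follows the paper's approach. The paper argues the same way: $\mathcal{H}_{0,k}$ is the union of the $\tilde{\mathcal{H}}_{0,\bt}$ over $\bt\in\mathcal{T}_k$, so the maximum $p_{(k)}$ dominates the valid $p$-value $p_{\bxi}$ from Lemma~\ref{lem:validity_of_conch_multi}; your Step~2 matches the paper's group-invariance proof of that lemma, and your Step~3 writes out the test inversion that the paper leaves implicit in its ``consequently''.
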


Notably, under the piecewise i.i.d.\ model, where observations within each segment are sampled i.i.d.\ from a common distribution, the $p$-value is valid not only under $\Hcal_{0,k}:K=k$, but also under the null $\Hcal_{0,\le k}:K\le k$. This follows because any refinement of the true segmentation preserves the exchangeability assumption within each resulting segment.

\subsection{Universality of \clocc}
In the earlier part, we established the \clocc{} framework as a principled way to construct a \dflcb on $K$ by exploiting the exchangeability structure in \Cref{ass:exchangeability}. Now, we show that this framework is in fact \emph{the only possible} way to build distribution-free lower confidence bounds on $K$.

\begin{theorem}\label{thm:universality_clocc}
Fix $\alpha\in(0,1)$. Let $L:\mathcal{X}^n\to\{0,\ldots,n-1\}$ be any procedure that gives a \dflcb on $K$ at level $1-\alpha$, i.e., for all $K\in \{0,1,\ldots,n-1\}$
\[
\P_{X\sim P}\bigl(L(\bX)\le K\bigr)\ge1-\alpha \qquad \text{for all}~P\in \mathfrak{P}_K.
\]
Then, there exists a score function $S:\mathcal{X}^n\times\bigcup_{k=0}^{n-1}\mathcal{T}_k\to\bar\R$ such that $L=\hat{L}^{\clocc}_\alpha$, where $\hat{L}^{\clocc}_\alpha$ is constructed using the CPP score $S$.
\end{theorem}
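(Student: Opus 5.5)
Given $L$, I will build a score $S$ from $L$ alone so that $p_{(k)}\le\alpha$ exactly when $L(\bX)>k$. Then $\hat{L}^{\clocc}_\alpha(\bX)=\min\{k:p_{(k)}>\alpha\}=\min\{k:L(\bX)\le k\}=L(\bX)$. The natural choice is the $\bt$-independent indicator score
\[
S(\bx,\bt):=\One{L(\bx)\le k}\qquad\text{for }\bt\in\mathcal{T}_k,\ k\in\{0,\ldots,n-1\},
\]
with $\mathcal{T}_0$ identified with $\{\emptyset\}$. For this score, $p_{\bt}$ counts the permutations $\pi\in\Pi_{\bt}$ with $S(\pi(\bx),\bt)\le S(\bx,\bt)$. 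If $L(\bx)\le k$, then $S(\bx,\bt)=1$ and every permutation counts, so $p_{\bt}=1>\alpha$ and hence $p_{(k)}=1>\alpha$. If $L(\bx)>k$, then $S(\bx,\bt)=0$, and
\[
p_{\bt}=\frac{1}{|\Pi_{\bt}|}\bigl|\{\pi\in\Pi_{\bt}:L(\pi(\bx))>k\}\bigr|.
\]
So it remains to show that $\max_{\bt\in\mathcal{T}_k}p_{\bt}\le\alpha$ whenever $L(\bx)>k$. A randomized tie-breaking variant may be needed if strict inequality fails.

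The key step, and the main obstacle, is a \emph{deterministic} bound: for every $\bx\in\Xcal^n$ and every $\bt\in\mathcal{T}_k$,
\[
\frac{1}{|\Pi_{\bt}|}\bigl|\{\pi\in\Pi_{\bt}:L(\pi(\bx))>k\}\bigr|\le\alpha.
\]
I would derive it from the validity of $L$ applied to a cleverly chosen distribution. Take the law of $\pi(\bx)$ with $\pi$ uniform on $\Pi_{\bt}$. This law is exchangeable within each segment induced by $\bt$, but the segments are not independent, because they are all tied to the single fixed vector $\bx$. I would fix this by noting that a uniform element of $\Pi_{\bt}$ is a product of independent uniform permutations of each block. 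Hence $\pi(\bx)$ is a concatenation of independent uniform shuffles of the fixed blocks $\bx_{I_0},\ldots,\bx_{I_k}$, and the blocks are in fact mutually independent. So this distribution lies in $\mathfrak{P}_k$, except that adjacent blocks might be mergeable into one exchangeable segment, for instance when two adjacent blocks are identical multisets of a single repeated value.

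To handle that degenerate case, I would read $\mathfrak{P}_K$ as in \Cref{def:DF_bounds}, namely as distributions satisfying \Cref{ass:exchangeability} for some $\bxi$ with $K$ entries, so that genuineness is not required. If genuineness is imposed, I would use the monotonicity built into validity: the distribution lies in $\mathfrak{P}_{k'}$ for some $k'\le k$, so $\P(L>k)\le\P(L>k')\le\alpha$. Either way, validity of $L$ gives $\P(L(\pi(\bx))>k)\le\alpha$, which is exactly the display above. For $k=0$ the same argument applies with $\Pi_\emptyset=\mathcal{S}_n$, since a uniform permutation of $\bx$ is exchangeable.

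If $L$ is randomized through $\zeta$, I would include $\zeta$ as an argument of the score and repeat the argument conditionally. Combining the two cases then gives $p_{(k)}\le\alpha$ exactly when $L(\bx)>k$, and therefore $\hat{L}^{\clocc}_\alpha=L$ identically.
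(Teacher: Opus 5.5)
Your proposal is correct and is essentially the paper's proof: the paper uses the same score $S(\bx,\bt)=\One{\mathrm{len}(\bt)\ge L(\bx)}$ and obtains $p_{(k)}=1$ when $L(\bx)\le k$. For $L(\bx)>k$ it bounds each $p_{\bt}$ by $\alpha$ by applying the validity of $L$ to the law of $\pi(\bx)$ with $\pi\sim\mathrm{Unif}(\Pi_{\bt})$, whose blocks are independent and exchangeable, so that this law has at most $k$ changepoints; your monotonicity remark makes that last step more explicit than the paper does.

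Since $\hat{L}^{\clocc}_\alpha$ only retains $k$ when $p_{(k)}>\alpha$, the bound $p_{(k)}\le\alpha$ already suffices and no tie-breaking is needed. Your closing aside on randomized $L$ would not work conditionally on $\zeta$, because validity holds only on average over $\zeta$; this lies outside the theorem, which concerns deterministic $L$.
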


We refer to this result as the universality of the \clocc{} algorithm. It states that any procedure that constructs a valid \dflcb on $K$ must be an instance of \clocc{} with an appropriate choice of CPP score $S$. Such universality results are not new in the conformal literature. Their earliest appearances are in the predictive inference literature (see \citealp[Chapter~2.4]{vovk2005algorithmic}; \citealp[Theorem~9.6]{angelopoulos2024theoretical}), and more recently analogous results have been established in the context of changepoint analysis for \conch{} in \cite{hore2026conformal} and \textsc{CROC} in \cite{hore2026distribution}.

At first glance, the maximization over configuration-level $p$-values in~\eqref{eq:pvalue_clocc} may appear to be a potentially conservative way of constructing a $p$-value for $\mathcal{H}_{0,k}$. However, the crucial observation is: for each $k\in\{0,\ldots,n-1\}$,
\[
\mathcal{H}_{0,k}
=
\bigsqcup_{\bt\in\mathcal{T}_k}
\tilde{\mathcal{H}}_{0,\bt},
\]
so that $\mathcal{H}_{0,k}$ decomposes into disjoint null hypotheses corresponding to different changepoint configurations. The maximization step in~\eqref{eq:pvalue_clocc} therefore provides a natural valid $p$-value for this composite null without requiring any multiplicity correction and without compromsing efficiency. 

One consequence of this universality result is that, to construct an efficient \dflcb on $K$, we do not need to look beyond the \clocc{} framework; instead, we may focus only on choosing an informative score function $S$. Later, in Section~\ref{sec:CPP_score}, we discuss in more detail how the choice of CPP score affects the efficiency of \clocc{}.

Another consequence is that any heuristic procedure for estimating the changepoint count, that may or may not enjoy any theoretical guarantees, can be wrapped within the \clocc{} framework by suitably defining a CPP score to obtain a valid \dflcb.

\section{Practical implementations of \clocc}\label{sec:practical_clocc}

While the \clocc{} algorithm provides a universal framework for drawing inference on the changepoint count, several practical modifications can enhance its applicability. We discuss these variants one by one, keeping the original \clocc{} algorithm as the baseline. None of these variants compromise the core statistical guarantee of \clocc.

\subsection{\clocc-exact: towards exact validity of $p_{\bt}$}

Recall that the $p$-value $p_{\bt}$ in~\eqref{eq:pvalue_conch_multi} is only guaranteed to be super-uniform. In principle, this may lead to a somewhat loose \dflcb $\hat{L}_\alpha^{\clocc}$. While we do not observe a significant difference in our experiments, a simple randomization can make the $p$-value $p_{\bt}$ exactly uniform under the null. In particular, for any $\bt\in\bigcup_{k=0}^{n-1}\mathcal{T}_k$, define
\begin{equation}
\label{eq:pvalue_conch_multi_exact}
\bar{p}_{\bt}:=\frac{1}{|\Pi_{\bt}|}\left(\sum_{\pi\in\Pi_{\bt}}\One{S(\pi(\bX),\bt)<S(\bX,\bt)}+U\cdot\sum_{\pi\in\Pi_{\bt}}
\One{S(\pi(\bX),\bt)=S(\bX,\bt)}\right),
\end{equation}
where $U\sim\mathrm{Unif}[0,1]$ is generated independently of the data. This $p$-value quantifies the evidence against the null $\tilde{\mathcal{H}}_{0,\bt}$ and is exactly uniform under this null.

\begin{lemma}
\label{lem:validity_of_conch_multi_exact}
Fix $\bt\in\bigcup_{k=0}^{n-1}\mathcal{T}_k$ and $\alpha\in(0,1)$. Under the null $\tilde{\mathcal{H}}_{0,\bt}$, the $p$-value $\bar{p}_{\bt}$ in~\eqref{eq:pvalue_conch_multi_exact} satisfies
\[
\P\left(\bar{p}_{\bt}\le\alpha\right)=\alpha.
\]
\end{lemma}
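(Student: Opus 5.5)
The plan is the standard exchangeability-plus-randomization argument for conformal and permutation $p$-values, carried out in the following order.

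\emph{Step 1: invariance of the null distribution.} Under $\tilde{\mathcal{H}}_{0,\bt}$, the segments $I_0,\ldots,I_k$ induced by $\bt$ are each exchangeable and mutually independent. (For $\bt=\emptyset$, the whole vector $\bX$ is exchangeable and $\Pi_\emptyset=\mathcal{S}_n$.) Hence $\sigma(\bX)\overset{d}{=}\bX$ for every $\sigma\in\Pi_{\bt}$.

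\emph{Step 2: introduce a uniform random group element.} Let $\sigma$ be drawn uniformly from $\Pi_{\bt}$, independently of $(\bX,U)$. Then $(\sigma(\bX),U)\overset{d}{=}(\bX,U)$. Consider the multiset of scores $\{S(\pi(\bX),\bt):\pi\in\Pi_{\bt}\}$. Because $\Pi_{\bt}$ is a group, $\{\pi\sigma:\pi\in\Pi_{\bt}\}=\Pi_{\bt}$. So this multiset is the same whether it is computed from $\bX$ or from $\sigma(\bX)$; write $\mathcal{M}(\bX)$ for it.

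\emph{Step 3: condition on the orbit.} Write $\bar p_{\bt}(\bx,u)$ for the $p$-value evaluated at data $\bx$ and seed $u$. By Step 2,
\[
\bar p_{\bt}(\bX,U)\overset{d}{=}\bar p_{\bt}(\sigma(\bX),U)=F\bigl(S(\sigma(\bX),\bt),U\bigr),
\]
where $F$ is the randomized empirical CDF of $\mathcal{M}(\bX)$. Explicitly,
\[
F(s,u)=\frac{1}{|\Pi_{\bt}|}\Bigl(\#\{m<s\}+u\,\#\{m=s\}\Bigr),
\]
with counts taken over $m\in\mathcal{M}(\bX)$. Conditional on $\bX$, the map $\pi\mapsto\pi\sigma$ is a bijection of $\Pi_{\bt}$, so $\pi\sigma$ is uniform on $\Pi_{\bt}$. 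Therefore $S(\sigma(\bX),\bt)$ is a uniform draw from the multiset $\mathcal{M}(\bX)$, independent of $U$.

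\emph{Step 4: randomized PIT on a discrete distribution.} It remains to show exact uniformity of the randomized probability integral transform for a discrete distribution. Let the distinct values of $\mathcal{M}(\bX)$ be $v_1<\cdots<v_r$, with masses $q_j=\#\{m=v_j\}/|\Pi_{\bt}|$. If the draw equals $v_j$, then $F=\sum_{i<j}q_i+Uq_j$. Since $U$ is uniform, this is uniform on the interval $[\sum_{i<j}q_i,\sum_{i\le j}q_i)$. These intervals partition $[0,1)$ and carry probabilities $q_j$ matching their lengths, so the mixture is exactly $\mathrm{Unif}[0,1]$. Thus $\P(\bar p_{\bt}\le\alpha\mid\bX)=\alpha$, and taking expectations gives $\P(\bar p_{\bt}\le\alpha)=\alpha$.

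The main obstacle is Step 3: justifying that the conditional distribution given $\bX$ is valid even though $\bX$ and $\sigma(\bX)$ differ. The clean route is to condition on the orbit $\{\pi(\bX):\pi\in\Pi_{\bt}\}$, or equivalently to work with the pair $(\bX,\sigma)$ and use the group property. One must also handle $\pm\infty$ scores. This is harmless, since only order comparisons on the extended reals $\bar{\R}$ are used.
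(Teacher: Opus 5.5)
Your proof is correct. It rests on the same two facts as the paper's proof. First, under the null the observed score behaves like a uniform draw from the scores on the $\Pi_{\bt}$-orbit. Second, the randomized probability integral transform of such a draw is exactly uniform. You obtain the first fact by a different device, however.

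\textbf{How the two routes differ.} The paper conditions on the segment multisets $M_1,\ldots,M_{k+1}$. It asserts that, conditionally, $S(\bX,\bt)$ has the law $F$ of $S(\pi(\bX),\bt)$ with $\pi\sim\mathrm{Unif}(\Pi_{\bt})$, rewrites $\bar p_{\bt}$ as a randomized PIT of $F$, and cites Lemma~E.1 of \citet{dandapanthula2025conformal}. You instead draw an independent $\sigma\sim\mathrm{Unif}(\Pi_{\bt})$ and use $(\bX,U)\overset{d}{=}(\sigma(\bX),U)$ together with the group closure $\Pi_{\bt}\sigma=\Pi_{\bt}$. You then condition only on $\bX$ and verify the discrete PIT by hand.

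\textbf{What each route buys.} Your route is fully elementary. It never needs the fact, taken for granted in the paper, that the conditional law of $\bX$ given its multisets is uniform over the orbit; that fact takes some care on a general space $\mathcal{X}$ and with ties. The paper's route is shorter given the citation. It also directly yields uniformity of $\bar p_{\bt}$ conditional on the multisets, which is what is later invoked in the proof of Theorem~\ref{thm:optimal_score}. Your argument gives this too with one extra line. Write $\mathbf{M}(\bx)$ for the tuple of segment multisets of $\bx$. Since $\bar p_{\bt}(\sigma(\bX),U)$ is uniform given $\bX$, it is independent of $\bX$. Since $\mathbf{M}(\sigma(\bX))=\mathbf{M}(\bX)$, you get $(\bar p_{\bt}(\bX,U),\mathbf{M}(\bX))\overset{d}{=}(\bar p_{\bt}(\sigma(\bX),U),\mathbf{M}(\bX))$.

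\textbf{Two small notational fixes.}
\begin{enumerate}
\item In Step~4, the identity $\P(\,\cdot\le\alpha\mid\bX)=\alpha$ holds for $\bar p_{\bt}(\sigma(\bX),U)$, not for $\bar p_{\bt}(\bX,U)$. Given $\bX$, the latter is uniform only on an interval whose length is the tie mass at $S(\bX,\bt)$. The marginal claim then follows from the distributional identity in Step~3.
\item In Step~3, uniformity of $S(\sigma(\bX),\bt)$ given $\bX$ comes simply from $\sigma$ being uniform and independent of $\bX$. The bijection $\pi\mapsto\pi\sigma$ is what gives invariance of the score multiset $\mathcal{M}$, not uniformity of the draw.
\end{enumerate}
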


Accordingly, let $\bar{p}_{(k)}:=\max_{\bt\in\mathcal{T}_k}\bar{p}_{\bt}$ for $k\in \{0,\ldots,n-1\}$ to finally construct the following lower bound on the changepoint count $K$:
\[
\hat{L}_{\alpha}^{\clocc\text{-exact}}(\bX):=\min\left\{k\in\{0,\ldots,n-1\}:\bar{p}_{(k)}>\alpha\right\}.
\]
Since these $p$-values are exactly uniform, $\bar{p}_{(n-1)}$ is no longer necessarily equal to $1$. Therefore, to ensure that $\hat{L}_{\alpha}^{\clocc\text{-exact}}(\bX)$ is well-defined, we set it equal to $n-1$ whenever the corresponding set is empty.
The resulting procedure is summarized in \Cref{alg:clocc_exact}.
By an argument analogous to that for \clocc{}, $\hat{L}_{\alpha}^{\clocc\text{-exact}}$ is a valid \dflcb on $K$.

\subsection{\clocc-MC: Monte--Carlo approximation to $p_{\bt}$}

Recall that for each $\bt\in\bigcup_{k=1}^{n-1}\mathcal{T}_k$, \clocc{} needs to compute the $p$-value $p_{\bt}$ as defined in~\eqref{eq:pvalue_conch_multi}. For $\bt=(t_1,\ldots,t_k)$ with $k\in[n-1]$, the corresponding split-permutation group $\Pi_{\bt}$ satisfies
\[
|\Pi_{\bt}|=t_1!(t_2-t_1)!\times\cdots\times(t_k-t_{k-1})!(n-t_k)!,
\]
which can be extremely large even for small $k$. Computing $p_{\bt}$ therefore requires evaluating $S(\pi(\bX),\bt)$ for every $\pi\in\Pi_{\bt}$, which can be expensive for a general score function $S$.

A natural solution is to compute a Monte--Carlo approximation to $p_{\bt}$. Fix $M\in\N$ and, independently of $\bX$, draw $\pi^{(1)},\ldots,\pi^{(M)}\overset{\mathrm{iid}}{\sim}\mathrm{Unif}(\Pi_{\bt})$, and define
\begin{equation}
\label{eq:mc_pvalue_conch_multi}
\widehat p_{\bt}:=\frac{1+\sum_{m=1}^M\One{S(\pi^{(m)}(\bX),\bt)\le S(\bX,\bt)}}{M+1}.
\end{equation}
The ``$+1$'' correction in both the numerator and denominator ensures finite-sample validity of the resulting Monte--Carlo $p$-value and is standard in the permutation-testing literature \citep{phipson2016permutation}. In fact, $\widehat p_{\bt}$ remains super-uniform under the null $\tilde{\mathcal{H}}_{0,\bt}$. Consequently, one may define $\widehat p_{(k)}:=\max_{\bt\in\mathcal{T}_k}\widehat p_{\bt}$ for $k\in \{0,\ldots,n-1\}$ and construct the \dflcb $\hat{L}_{\alpha}^{\clocc\text{-MC}}(\bX):=\min\{k\in\{0,\ldots,n-1\}:\widehat p_{(k)}>\alpha\}$, analogously to~\eqref{eq:clocc_dflcb}. We call this the \clocc-MC algorithm, given formally in~\Cref{alg:clocc_MC}.

\begin{theorem}
\label{thm:coverage-clocc_MC}
For any $\bt\in\bigcup_{k=1}^{n-1}\mathcal{T}_k$, $\widehat p_{\bt}$ defined in~\eqref{eq:mc_pvalue_conch_multi} is a valid $p$-value under $\tilde{\mathcal{H}}_{0,\bt}$. In particular, for any $\alpha\in(0,1)$,
\[
\P\bigl(\widehat p_{\bt}\le\alpha\bigr)\le\alpha.
\]
Consequently, $\hat{L}_{\alpha}^{\clocc\text{-MC}}$ is a valid \dflcb on $K$.
\end{theorem}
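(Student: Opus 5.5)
The plan is to prove \Cref{thm:coverage-clocc_MC} by the standard exchangeability argument for Monte--Carlo permutation $p$-values, using the group structure of $\Pi_{\bt}$, and then deduce the coverage of $\hat{L}_{\alpha}^{\clocc\text{-MC}}$ exactly as in \Cref{thm:validity_clocc}.

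\textbf{Step 1: invariance under the null.} Fix $\bt\in\mathcal{T}_k$ and assume $\tilde{\mathcal{H}}_{0,\bt}$ holds. The segments induced by $\bt$ are then exactly the true segments. By \Cref{ass:exchangeability}, each segment is exchangeable and the segments are mutually independent. Hence $\sigma(\bX)\overset{d}{=}\bX$ for every fixed $\sigma\in\Pi_{\bt}$.

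\textbf{Step 2: exchangeable $(M+1)$-tuple.} Let $\pi_0$ be drawn uniformly from $\Pi_{\bt}$, independent of everything else, and set $\bY:=\pi_0(\bX)$. By Step 1, $\bY\overset{d}{=}\bX$. Now write $\pi^{(m)}=\sigma^{(m)}\circ\pi_0^{-1}$, where $\sigma^{(m)}:=\pi^{(m)}\circ\pi_0$. Because $\Pi_{\bt}$ is a group and Haar measure is invariant, the variables $\sigma^{(0)}:=\mathrm{id}\circ\pi_0,\sigma^{(1)},\ldots,\sigma^{(M)}$ are i.i.d.\ uniform on $\Pi_{\bt}$ and independent of $\bX$.

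It is cleaner to argue directly. Condition on the orbit of $\bX$, or equivalently use the invariance of $\bX$ under $\Pi_{\bt}$, to show that
\[
(\bX,\pi^{(1)}(\bX),\ldots,\pi^{(M)}(\bX))\overset{d}{=}(\pi_0(\bX),\pi^{(1)}\pi_0(\bX),\ldots,\pi^{(M)}\pi_0(\bX)).
\]
The right-hand side is $(\tau_0(\bX),\ldots,\tau_M(\bX))$, where $\tau_0,\ldots,\tau_M$ are i.i.d.\ uniform on $\Pi_{\bt}$ and independent of $\bX$. Conditionally on $\bX$, this tuple is i.i.d., hence exchangeable. It follows that the scores $Z_m:=S(\tau_m(\bX),\bt)$ for $m=0,\ldots,M$ are exchangeable.

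\textbf{Step 3: rank bound.} Under the representation in Step 2, $\widehat p_{\bt}$ has the law of
\[
\frac{1}{M+1}\,\#\{m\in\{0,\ldots,M\}:Z_m\le Z_0\}.
\]
This quantity is the normalized rank of $Z_0$ among exchangeable variables, with ties counted in its favour. The standard deterministic fact is that, for any real numbers $z_0,\ldots,z_M$, at most $\lfloor\alpha(M+1)\rfloor$ indices $j$ satisfy $\#\{m:z_m\le z_j\}\le\alpha(M+1)$. Averaging over the uniformly random position that exchangeability gives to $Z_0$ then yields $\P(\widehat p_{\bt}\le\alpha)\le\alpha$. Scores taking the values $\pm\infty$ cause no difficulty, since only comparisons are used.

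\textbf{Step 4: validity of the bound.} Under $\mathcal{H}_{0,k}$ with true configuration $\bxi\in\mathcal{T}_k$, the definition $\widehat p_{(k)}=\max_{\bt\in\mathcal{T}_k}\widehat p_{\bt}$ gives $\widehat p_{(k)}\ge\widehat p_{\bxi}$. Therefore $\P(\widehat p_{(k)}\le\alpha)\le\alpha$. The event $\hat{L}_{\alpha}^{\clocc\text{-MC}}>K$ requires $\widehat p_{(K)}\le\alpha$, which gives the stated coverage. Well-definedness needs one check: if $k=n-1$, then $\Pi_{\bt}$ is trivial, so $\widehat p_{\bt}=1>\alpha$ and the defining set is nonempty. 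The case $K=0$, with $\Pi_\emptyset=\mathcal{S}_n$, is handled identically.

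The main obstacle is Step 2, namely justifying that the original sample may be treated as one more uniform draw from the group. This relies on the group property of $\Pi_{\bt}$ and the distributional invariance from Step 1.
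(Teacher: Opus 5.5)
Your proposal is correct and takes essentially the same route as the paper's proof. Both introduce an independent uniform draw $\pi_0\in\Pi_{\bt}$ and use null invariance to replace $\bX$ by $\pi_0(\bX)$, so that $\widehat p_{\bt}$ becomes the normalized rank of one member of an exchangeable collection of $M+1$ permuted scores; both then pass to $\hat{L}_{\alpha}^{\clocc\text{-MC}}$ via $\widehat p_{(K)}\ge\widehat p_{\bxi}$. Your explicit handling of the compositions (taking $\tau_0=\pi_0$ and $\tau_m$ as the product of $\pi^{(m)}$ with $\pi_0$) and your deterministic rank bound are actually tidier than the paper's version, which is loose about where an inverse of $\pi_0$ is needed and leaves the final rank inequality implicit.
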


\subsection{\clocc-split: a split-conformal adaptation}

Drawing a parallel with the conformal predictive inference literature, \clocc{} can be viewed as a full-conformal adaptation to the task of lower bounding the number of changepoints. In particular, the CPP score function may depend arbitrarily on the observed data sequence. Thus, when computing the $p$-values in~\eqref{eq:pvalue_conch_multi}, one may need to relearn the score function for each permutation, which can add substantially to the computational burden.

To circumvent this cost, we propose a split-conformal adaptation of \clocc{}, where one subset of the observations is used to learn the CPP score and the remaining observations are used to compute the \clocc{} $p$-values.

Formally, let $\mathcal I_1:=\{i\in[n]:i\text{ is odd}\}$ and $\mathcal I_2:=[n]\setminus\mathcal I_1$ denote the odd and even indices, respectively, and write $\mathcal D_1=(X_i)_{i\in\mathcal I_1}$ and $\mathcal D_2=(X_i)_{i\in\mathcal I_2}$. We further write $\boldsymbol Y=(X_2,X_4,\ldots,X_{2\lfloor n/2\rfloor})$.

Using only $\mathcal D_1$, we first construct a CPP score $\widehat S=\mathcal A(\mathcal D_1):\mathcal X^{\lfloor n/2\rfloor}\times\bigcup_{k=1}^{\lfloor n/2\rfloor-1}\mathcal T_k\to\bar\R$. The learned score $\widehat S$ is then held fixed while running \clocc{} on $\boldsymbol Y$, yielding the \dflcb $\hat{L}_\alpha^{\clocc\text{-split}}$. We call this the \clocc-split algorithm, given formally in~\Cref{alg:clocc_split}.

\begin{theorem}
\label{thm:coverage-clocc_split}
The statistic $\hat{L}_{\alpha}^{\clocc\text{-split}}$ is a valid \dflcb on $K$.
\end{theorem}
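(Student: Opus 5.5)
The plan is to condition on the first half $\mathcal D_1$ and reduce the claim to \Cref{thm:validity_clocc} applied to the even-indexed subsequence $\boldsymbol Y$ with a fixed score. Let $m:=\lfloor n/2\rfloor$ and write $Y_j:=X_{2j}$ for $j\in[m]$.

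\textbf{Step 1: $\boldsymbol Y$ has no more changepoints than $\bX$.} Under \Cref{ass:exchangeability}, with true changepoints $\bxi$, the index sets $\{\xi_{k-1}+1,\ldots,\xi_k\}\cap\mathcal I_2$ split $\boldsymbol Y$ into at most $K+1$ consecutive nonempty blocks. Each block is a subvector of an exchangeable segment, so it is exchangeable. Blocks coming from different segments are mutually independent. Hence $\boldsymbol Y$ satisfies \Cref{ass:exchangeability} with some count $K'\le K$, at the induced locations $\xi'_k=\lfloor \xi_k/2\rfloor$ after removing duplicates and endpoints.

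There is a subtlety: the paper's definition of $\mathfrak P_{K'}$ requires genuine changes. Induced adjacent blocks of $\boldsymbol Y$ might be mergeable into a single exchangeable block, so the true count for $\boldsymbol Y$ could be even smaller. To avoid this, I will argue at the level of configuration $p$-values. Let $\bt'\in\mathcal T_{K'}$ be the induced segmentation. Validity of $p_{\bt'}$ only needs within-block exchangeability and independence across blocks, which holds whether or not the changes are genuine. This is exactly the argument behind \Cref{lem:validity_of_conch_multi}, applied on $\mathcal X^{m}$.

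\textbf{Step 2: conditioning.} By independence across segments, and since each segment's odd and even entries jointly form an exchangeable vector, I need the conditional law of $\boldsymbol Y$ given $\mathcal D_1$ to retain within-block exchangeability and across-block independence. Independence across blocks given $\mathcal D_1$ holds because the segment pairs (odd part, even part) are independent across segments. Within a segment, exchangeability of the full segment implies that the even part is exchangeable conditionally on the odd part: permuting even positions among themselves is a permutation of the segment that fixes the odd coordinates. So conditionally on $\mathcal D_1$, the score $\widehat S$ is fixed and $\boldsymbol Y$ satisfies the hypotheses of \Cref{lem:validity_of_conch_multi} for $\bt'$ (or of the $K'=0$ analogue with $p_\emptyset$). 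This gives $\P(p_{\bt'}\le\alpha\mid\mathcal D_1)\le\alpha$.

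\textbf{Step 3: conclude.} On the event $p_{\bt'}>\alpha$ we have $p_{(K')}>\alpha$, so $\hat L^{\clocc\text{-split}}_\alpha\le K'\le K$. Taking expectation over $\mathcal D_1$ yields coverage $1-\alpha$.

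The main obstacle is Step 2. It requires verifying conditional exchangeability of the even-indexed entries of a segment given its odd-indexed entries, and handling segments that contain no even index or that straddle in a way that creates empty blocks. Empty blocks simply disappear, which only lowers $K'$. The score's domain also has to be respected: the algorithm restricts to $k\le m-1$, so I must check that $K'\le m-1$, which is automatic since $\boldsymbol Y$ has $m$ entries.
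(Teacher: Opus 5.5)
Your proposal is correct and follows the same route as the paper, whose omitted proof is a sketch. Condition on $\mathcal D_1$ so that $\widehat S$ is fixed, invoke the validity of \clocc{} on $\boldsymbol Y$, and use that $\boldsymbol Y$ inherits at most $K$ changepoints. You also supply the details the paper leaves implicit: that the even-indexed entries stay within-block exchangeable and independent across segments given $\mathcal D_1$, and that working with the induced configuration $p$-value $p_{\bt'}$ via Lemma~\ref{lem:validity_of_conch_multi} avoids the genuine-change convention.
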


Conditional on $\mathcal{D}_1$, the validity of $\hat{L}_{\alpha}^{\clocc\text{-split}}$ is immediate from Theorem~\ref{thm:validity_clocc}, and hence the proof is omitted.

While such sample splitting may lose some the changepoints, the number of true changepoints in the subsequence $\boldsymbol Y$ is always no larger than the number of true changepoints in the full data sequence $\bX$. Consequently, $\hat{L}_{\alpha}^{\clocc\text{-split}}$ remains a valid \dflcb for the original problem. In fact, if
\[
\max_{k\in [K+1]} (\xi_{k}-\xi_{k-1}) \ge 2,
\]
that is, no two true changepoints in $\bX$ are adjacent and changepoints are away from the boundaries, then no changepoint is lost under this interlaced sample splitting. Such non-adjacency is quite natural in most real world applications.

\section{Role of CPP score}
\label{sec:CPP_score}
In Section~\ref{sec:clocc_defn}, we established that the finite-sample validity of the \dflcb returned by the \clocc{} framework holds irrespective of the choice of CPP score function $S$. However, the CPP score plays an important role in efficiency: an informative score can result in a substantially tighter \dflcb. Note that the tightness of a lower bound is meaningful only when there is at least one changepoint. In this section, we therefore assume $K\ge 1$ and discuss the role of the CPP score in determining the efficiency of \clocc.

Recall that, within the \clocc{} framework, we first test the null $\tilde{\mathcal{H}}_{0,\bt}:\bxi=\bt$ via the $p$-value $p_{\bt}$ for each $\bt\in\bigcup_{k=1}^{n-1}\mathcal{T}_k$. The $p$-value $p_{\bt}$ is an adaptation of the \conch{} $p$-value to the multiple-changepoint setting, a construction that was also studied in the context of root-cause analysis in \cite{hore2026distribution}. We can therefore adapt Proposition~5.1 therein to obtain the following basic guidelines for choosing the CPP score.

\begin{proposition}\label{prop:score-properties}
Fix $n\in\mathbb{N}$ and $\alpha\in(0,1)$.
\begin{itemize}
    \item \textnormal{\textbf{(Symmetry yields power loss).}} Fix $k\in\{1,\ldots,n-1\}$ and $\bt\in\mathcal{T}_k$. If $S$ satisfies $S(\cdot,\bt)=S(\pi(\cdot),\bt)$ for all $\pi\in\Pi_{\bt}$, then the $p$-value $p_{\bt}$ in~\eqref{eq:pvalue_conch_multi} equals $1$. Consequently, $p_{(k)}=1$ and $\hat{L}_{\alpha}^{\clocc}(\bx)\le k$.

    \item \textbf{(Conformal data-processing inequality).} Let $L_1$ be the \dflcb returned by \clocc{} using the score $S$. For any non-decreasing function $f:\bar\R\to\bar\R$, let $L_2$ be the corresponding \dflcb obtained using $f(S)$. Then $L_1\ge L_2$, with equality whenever $f$ is strictly increasing.
\end{itemize}
\end{proposition}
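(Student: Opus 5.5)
The proposition has two parts, and I will handle them separately. Both follow directly from the definitions \eqref{eq:pvalue_conch_multi}, \eqref{eq:pvalue_clocc} and \eqref{eq:clocc_dflcb}.

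\textbf{Symmetry.} Assume $S(\pi(\bx),\bt)=S(\bx,\bt)$ for every $\pi\in\Pi_{\bt}$ and every $\bx$. Then every indicator $\One{S(\pi(\bX),\bt)\le S(\bX,\bt)}$ in \eqref{eq:pvalue_conch_multi} equals $1$, so $p_{\bt}=1$. This holds deterministically, with no distributional argument needed. Every $p$-value is at most $1$, so $p_{(k)}=\max_{\bt'\in\mathcal T_k}p_{\bt'}\ge p_{\bt}=1$, and hence $p_{(k)}=1>\alpha$. Because $\alpha<1$, the index $k$ lies in the set $\{k':p_{(k')}>\alpha\}$. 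The minimum in \eqref{eq:clocc_dflcb} is therefore at most $k$, which gives $\hat L^{\clocc}_\alpha(\bx)\le k$.

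\textbf{Data-processing inequality.} The key step is a pointwise comparison of the configuration-level $p$-values. Let $p^{(1)}_{\bt}$ be the $p$-value built from $S$ and $p^{(2)}_{\bt}$ the one built from $f\circ S$. Since $f$ is non-decreasing, $S(\pi(\bX),\bt)\le S(\bX,\bt)$ implies $f(S(\pi(\bX),\bt))\le f(S(\bX,\bt))$. Each indicator for $f\circ S$ therefore dominates the corresponding indicator for $S$, and summing over $\Pi_{\bt}$ gives $p^{(2)}_{\bt}\ge p^{(1)}_{\bt}$ for every $\bt$, including $\bt=\emptyset$.

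Taking the maximum over $\mathcal T_k$ preserves this order, so $p^{(2)}_{(k)}\ge p^{(1)}_{(k)}$ for every $k$. Consequently
\[
\{k:p^{(1)}_{(k)}>\alpha\}\subseteq\{k:p^{(2)}_{(k)}>\alpha\}.
\]
The minimum over a larger set is no bigger, so $L_2\le L_1$.

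For the equality claim, let $f$ be strictly increasing. Then $a\le b\iff f(a)\le f(b)$ on $\bar\R$, so the two families of indicators coincide. It follows that $p^{(1)}_{\bt}=p^{(2)}_{\bt}$ for all $\bt$, and hence $L_1=L_2$.

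The only subtle point is working on the extended reals $\bar\R$. There I need $f$ to be defined and monotone on $\pm\infty$ and the order to behave as usual, which holds by assumption since $f:\bar\R\to\bar\R$.
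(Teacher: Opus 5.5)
Your proposal is correct and follows essentially the same argument as the paper: the indicators in \eqref{eq:pvalue_conch_multi} are all equal to $1$ under $\Pi_{\bt}$-invariance, and for the data-processing part you compare indicators pointwise, then pass the order through the maximum over $\mathcal{T}_k$ and through the minimum defining the lower bound. You also spell out two steps the paper leaves implicit: how $p_{(k)}=1>\alpha$ gives $\hat{L}_{\alpha}^{\clocc}\le k$, and the equality case when $f$ is strictly increasing.
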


\subsection{Optimal CPP score}

To characterize an optimal choice of score, we consider an oracle piecewise i.i.d.\ changepoint model specified by a sequence of distributions $P_1,P_2,\ldots$, where each $P_j$ admits a density $f_j$ with respect to a common dominating measure $\mu$.  Specifically, when there are $K$ changepoints at locations $\bxi=(\xi_1,\ldots,\xi_K)$, with $\xi_0=0$ and $\xi_{K+1}=n$, we assume
\[
\Pcal^{(j)}=(P_j)^{\xi_j-\xi_{j-1}},\qquad j\in[K+1].
\]
Thus, the observations in the $j$th segment are i.i.d.\ from $P_j$. Let $\Pcal_{\mathrm{IID}}\subseteq\bigcup_{k=1}^{n-1}\mathfrak{P}_k$ denote the subclass of distributions arising under this model.

Here for the optimality analysis, we resort to the \clocc-exact algorithm, since it is less conservative than the original \clocc, and facilitates a fair analysis. From the last section, the core component of \clocc-exact, the conformal $p$-value $\bar p_{\bt}$ in~\eqref{eq:pvalue_conch_multi_exact} tests the null $\tilde{\mathcal{H}}_{0,\bt}$, and only the component $S(\cdot,\bt)$ is relevant for this test.  Given knowledge of the true changepoint vector $\bxi$, designing an efficient CPP score can therefore be viewed through the problem of building a powerful test for $\tilde{\mathcal{H}}_{0,\bt}$ versus $\tilde{\mathcal{H}}_{0,\bxi}$.

For this analysis, we assume the knowledge of the true number of changepoints $K$, and consider candidate vectors $\bt=(t_1,\ldots,t_\ell)\in\bigcup_{\ell=1}^{n-1}\mathcal{T}_\ell$, with the conventions $t_0=\xi_0=0$ and $t_{\ell+1}=\xi_{K+1}=n$. We will use the exact conformal $p$-value $(\bar p_\bt)$, defined in~\eqref{eq:pvalue_conch_multi_exact} for the aforementioned testing problem. As a by-product of the construction, the corresponding $p$-values also yield a confidence set for the full changepoint vector,
\begin{equation}\label{eq:conch_multi}
    \bar\Ccal_{1-\alpha}^{\conch\text{-multi}}
    :=\left\{\bt\in\bigcup_{\ell=1}^{n-1}\mathcal{T}_\ell:\bar p_{\bt}>\alpha\right\}.
\end{equation}
An efficient CPP score should ideally result in a small $\Ccal_{1-\alpha}^{\conch\text{-multi}}$. The following result characterizes an optimal score for this purpose.

\begin{theorem}\label{thm:optimal_score}
Fix $\bt=(t_1,\ldots,t_\ell)$ and $\bxi=(\xi_1,\ldots,\xi_K)$. Any strictly increasing transformation of the CPP score $S^{\mathrm{OPT}}$ defined by
\begin{equation}\label{eq:optimal_CPP_score}
S^{\mathrm{OPT}}(\bx,\bt)
=\log\left(\frac{\prod_{j=1}^{\ell+1}\prod_{i=t_{j-1}+1}^{t_j}f_j(x_i)}{\prod_{j=1}^{K+1}\prod_{i=\xi_{j-1}+1}^{\xi_j}f_j(x_i)}\right)
\end{equation}
is optimal, i.e., for any $\bxi\in\mathcal{T}_K$, any strictly increasing function $f:\bar\R\to\bar\R$ and any score $S:\mathcal{X}^n\times\bigcup_{\ell=1}^{n-1}\mathcal{T}_\ell\to\bar\R$,
\[
\E_{\tilde{\Hcal}_{0,\bxi}\,\cap\,\Pcal_{\mathrm{IID}}}\!\left[|\bar{\Ccal}_{1-\alpha}^{\conch\text{-multi}}(S)|\right]\ge\E_{\tilde{\Hcal}_{0,\bxi}\,\cap\,\Pcal_{\mathrm{IID}}}\!\left[|\bar{\Ccal}_{1-\alpha}^{\conch\text{-multi}}(f(S^{\mathrm{OPT}}))|\right].
\]
\end{theorem}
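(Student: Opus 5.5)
The plan is to reduce the statement about expected set size to a family of separate Neyman--Pearson problems, one for each candidate $\bt$. By linearity of expectation,
\[
\E_{\tilde\Hcal_{0,\bxi}}\bigl[|\bar\Ccal_{1-\alpha}^{\conch\text{-multi}}(S)|\bigr]=\sum_{\bt}\P_{\bxi}\bigl(\bar p_{\bt}(S)>\alpha\bigr).
\]
It therefore suffices to show that, for each fixed $\bt$, the score $f(S^{\mathrm{OPT}})$ minimizes $\P_{\bxi}(\bar p_{\bt}>\alpha)$. Equivalently, it maximizes the power of the randomized test $\phi_S:=\One{\bar p_{\bt}(S)\le\alpha}$ against the alternative $\bX\sim\prod_j f_j$ segmented at $\bxi$. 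Only $S(\cdot,\bt)$ enters $\bar p_{\bt}$, so the scores can be optimized coordinatewise in $\bt$. The case $\bt=\bxi$ is trivial, because there $\P(\bar p_{\bxi}\le\alpha)=\alpha$ for every score by Lemma~\ref{lem:validity_of_conch_multi_exact}.

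The key structural step is to condition on the $\Pi_{\bt}$-orbit $\mathcal{O}(\bx)=\{\pi(\bx):\pi\in\Pi_{\bt}\}$, treating it as a multiset. Under any null distribution in $\tilde\Hcal_{0,\bt}$, the conditional law of $\bX$ given its orbit is uniform over $\Pi_{\bt}$. The randomized $p$-value $\bar p_{\bt}$ is built to have exact conditional level $\alpha$ given the orbit: conditionally, it is the randomized rank of $S(\bX,\bt)$ among $\{S(\pi\bX,\bt)\}$. Hence every $\phi_S$ is a conditional level-$\alpha$ permutation test. Conversely, $\phi_S$ rejects exactly the elements of the orbit with the smallest $\alpha$-fraction of scores, randomizing at the boundary.

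Under the alternative, the conditional law of $\bX$ given the orbit puts mass proportional to the alternative density, $\prod_{j}\prod_{i\in(\xi_{j-1},\xi_j]}f_j(x_i)$, on each element. The null reference is uniform over the orbit. On a given orbit, the numerator of $S^{\mathrm{OPT}}$, namely $\prod_j\prod_{i\in(t_{j-1},t_j]}f_j(x_i)$, is constant, because $\Pi_{\bt}$ only permutes within $\bt$-segments. So $S^{\mathrm{OPT}}(\pi\bx,\bt)$ is a strictly decreasing function of the conditional alternative likelihood ratio against the uniform reference. The test that rejects the $\alpha$-fraction of orbit points with the smallest $S^{\mathrm{OPT}}$ is therefore exactly the conditional Neyman--Pearson test, with randomization at ties matching the $U$-term in~\eqref{eq:pvalue_conch_multi_exact}.

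By the Neyman--Pearson lemma applied on each orbit (a finite probability space), $\phi_{S^{\mathrm{OPT}}}$ has conditional power at least that of any $\phi_S$. Integrating over the orbit distribution under the alternative gives unconditional dominance. Strictly increasing $f$ leaves $\bar p_{\bt}$ unchanged, since the $p$-value depends only on the ordering and ties of scores. Summing over $\bt$ then yields the claimed inequality.

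The main obstacle is making the conditional reduction rigorous for general $\mu$ and orbits with repeated points. One must check that $\bar p_{\bt}$ is conditionally exactly level $\alpha$ and that its rejection region coincides with the NP region, including the boundary randomization. I would first verify that the uniform-$U$ tie-breaking gives rejection probability exactly $\alpha$ on each orbit counted with multiplicity. I would then use the representation $\bX=\pi\bX$ with $\pi\sim\mathrm{Unif}(\Pi_{\bt})$ under the null to avoid measure-theoretic conditioning subtleties.

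A further subtlety is that the comparison is between conditional level-$\alpha$ tests. Any competitor $\phi_S$ is of this form, so the restricted NP lemma on each orbit applies directly, and no unconditional optimality claim is needed.
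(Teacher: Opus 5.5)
Your proposal is correct and follows essentially the same route as the paper: write the expected set size as a sum of per-$\bt$ acceptance probabilities, condition on the within-segment multisets $M_1,\ldots,M_{k+1}$ (your $\Pi_{\bt}$-orbit), and apply the Neyman--Pearson lemma to the conditional problem. There the likelihood ratio is a decreasing function of $S^{\mathrm{OPT}}$, because the $\bt$-numerator is constant on the orbit, and strictly increasing $f$ leaves $\bar p_{\bt}$ unchanged. Your treatment of the boundary randomization and of integrating conditional power over orbits is, if anything, more explicit than the paper's Lemma~\ref{lem:conformal_np_lemma_multi}.
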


This result extends the expression of the optimal score function for \conch{} in the single-changepoint setting (cf.\ Section~4 of \cite{hore2026conformal}).

\subsection{Practical CPP scores}

The optimal score in~\eqref{eq:optimal_CPP_score} depends on the number of changepoints $K$, the true changepoint locations $\bxi$, and the unknown densities $f_1,f_2,\ldots$. Therefore, the optimal score cannot be used directly. In practice, we instead mimic its main structure, namely, comparing how well a candidate segmentation $\bt$ explains the observed sequence relative to the ``best'' reference segmentation.

First, one may use any changepoint localization algorithm to obtain a reference estimate
\[
\hat{\bxi}:=(\hat\xi_1,\ldots,\hat\xi_{\hat K}).
\]
One can use any off-the-shelf method to obtain this estimate; a theoretically grounded approach would be to use kernel changepoint detection (KCPD) from \citet{arlot2019kernel,garreau2018consistent}. Here, $\hat K$ is the estimated changepoint count. The estimates $\hat{\bxi}$ and $\hat K$ are then used as proxies for $\bxi$ and $K$, respectively. In particular, when the segment densities $f_1,f_2,\ldots$ are known, we may define
\begin{equation}\label{eq:oracle_likelihood}
L(\bx,\bt)=\sum_{j=1}^{k+1} \sum_{i=t_{j-1}+1}^{t_j} \log(f_j(x_i)),
\end{equation}
for any $(t_1,\ldots,t_k)\in \mathcal{T}_k$ and $k\in \{1,\ldots,n-1\}$, as the complete log-likelihood of the data sequence $\bx$ under a candidate segmentation $\bt$. We then define the oracle log-likelihood ratio (oracle LLR) score
\begin{equation}\label{eq:oracle_CPP_score}
S^{\mathrm{orcl}}(\bx,\bt)
=\log\left(\frac{\prod_{j=1}^{\ell+1}\prod_{i=t_{j-1}+1}^{t_j}f_j(x_i)}{\prod_{j=1}^{\hat K+1}\prod_{i=\hat\xi_{j-1}+1}^{\hat\xi_j}f_j(x_i)}
\right).
\end{equation}

However, in practice, the true densities $f_1,f_2,\ldots$ are unknown, and therefore the score $S^{\mathrm{orcl}}$ cannot be computed. Instead, we may obtain estimates $\hat f_1,\hat f_2,\ldots$ and consider the score
\begin{equation}\label{eq:learned_CPP_score}
S(\bx,\bt)=\log\left(
\frac{\prod_{j=1}^{\ell+1}\prod_{i=t_{j-1}+1}^{t_j}\hat f_j(x_i)}{\prod_{j=1}^{\hat K+1}\prod_{i=\hat\xi_{j-1}+1}^{\hat\xi_j}\hat f_j(x_i)}
\right).
\end{equation}
These density estimates may be obtained parametrically or nonparametrically. When the score itself is learned from the observed data, the full \clocc{} procedure requires relearning these components for every permutation to ensure validity.

A computationally cheaper alternative is to implement the \clocc-split algorithm using interlaced sample splitting: the reference segmentation, density estimates, or classifier-based likelihood ratios can be learned on the first split and then held fixed while computing the conformal $p$-values on the second split. It is worth noting that the numerator in the score $S(\bx,\bt)$ defined in~\eqref{eq:learned_CPP_score} is invariant under permutations in $\Pi_\bt$. Therefore, to compute the $p$-value $p_\bt$, one may equivalently use $-\hat{L}(\bx;\hat{\bxi})$ as the CPP score, where
\begin{equation}\label{eq:likelihood_at_hatxi}
\hat{L}(\bx;\hat{\bxi}):=\prod_{j=1}^{\hat K+1}\prod_{i=\hat\xi_{j-1}+1}^{\hat\xi_j}\hat f_j(x_i)
\end{equation}
is the complete estimated likelihood of $\bx$ under the reference segmentation $\hat{\bxi}$.

Direct density estimation can be difficult in high-dimensional or structured data settings. However, given a candidate segmentation $\bt=(t_1,\ldots,t_\ell)$, the oracle CPP score in~\eqref{eq:oracle_CPP_score} can be equivalently written as
\[
S^{\mathrm{orcl}}(\bx,\bt)=\sum_{j=2}^{\ell+1}\sum_{i=t_{j-1}+1}^{t_j}\log\left(\frac{f_j}{f_1}(x_i)\right)
-\sum_{j=2}^{\hat K+1}\sum_{i=\hat\xi_{j-1}+1}^{\hat\xi_j}\log\left(\frac{f_j}{f_1}(x_i)\right).
\]
This classifier-based implementation provides a particularly convenient way to compute the \clocc-split \dflcb. We summarize the resulting practical procedure in \Cref{alg:clocc_split_classification}.

Thus, instead of estimating the densities directly, one may estimate the relevant density ratios via multiclass classification: after assigning observations to the estimated segments, a multiclass classifier can be trained to distinguish among the segment distributions, and the resulting logits can be used as proxies for the corresponding log-density ratios.

Importantly, none of these modeling choices affect the finite-sample validity of \clocc{}; they only affect the tightness of the resulting \dflcb.




\section{\clocc-SEG: an efficient segmentwise test}
\label{sec:clocc_seg}

While \clocc-MC and \clocc-split simplify the computation of an individual conformal $p$-value, in the worst case, one still needs to compute $p_{\bt}$ for every $\bt\in\bigcup_{k=1}^{n-1}\mathcal{T}_k$. The total number of such candidate changepoint vectors is
\[
\sum_{k=1}^{n-1}|\mathcal{T}_k|=
\sum_{k=1}^{n-1}\binom{n-1}{k}=2^{n-1}-1.
\]
Thus, even with Monte--Carlo approximation and sample splitting, evaluating a separate $p$-value $p_{\bt}$ for every feasible $\bt$ can become computationally prohibitive for large $n$. It is worth noting that this is the worst-case scenario. In practice, to compute \dflcb $\hat{L}_\alpha^{\clocc}$, since it is the smallest $k$ such that $p_{(k)}$ in~\eqref{eq:pvalue_clocc} is larger than the nominal level $\alpha$, we may start the search with $k=1$ and stop at the smallest $k$ where we find success. Since, in practice, $K$ is much smaller than $n$, we would need to enumerate a much smaller collection of candidate changepoints.

Another practical modification to reduce this computational cost is to restrict the candidate changepoint locations to a pre-fixed feasible set (typically smaller). In many applications, changepoints are known a priori to lie on a prespecified grid $\mathcal{G}\subseteq[n-1]$. In this case, one may replace $\mathcal{T}_k$ by
\[
\mathcal{T}_k(\mathcal{G})
:=
\left\{(t_1,\ldots,t_k)\in\mathcal{T}_k:t_j\in\mathcal{G}\text{ for all }j\in[k]\right\},
\]
and define $p_{(k)}$ by computing the maximum of $p_{\bt}$ only over $\bt\in\mathcal{T}_k(\mathcal{G})$. Provided that the true changepoint vector belongs to $\mathcal{G}$, the validity result still holds while the number of candidate segmentations can be substantially reduced. We use such prespecified grids in some of our numerical experiments.

Even with these modifications, we can not fully eliminate its combinatorial cost when no sufficiently small feasible grid is available. We therefore develop a more systematic scalable variant, called \clocc-SEG, which avoids computing a separate permutation $p$-value for every candidate changepoint configuration.

Fix $k\in[n-1]$ and a candidate changepoint vector $\bt=(t_1,\ldots,t_k)\in\mathcal{T}_k$, with $t_0=0$ and $t_{k+1}=n$. The vector $\bt$ induces the segmentation
\[
I_j(\bt):=[t_{j-1}+1,t_j],
\qquad j\in[k+1].
\]
For any interval $I=[s,t]$ with $1\le s\le t\le n$, let $\mathcal{H}^{\mathrm{ex}}_{0,I}$ denote the null hypothesis that $(X_s,\ldots,X_t)$ is exchangeable. Under $\tilde{\mathcal{H}}_{0,\bt}$, each of the induced segments is exchangeable, and hence
\[
\tilde{\mathcal{H}}_{0,\bt}
\subseteq
\bigcap_{j=1}^{k+1}
\mathcal{H}^{\mathrm{ex}}_{0,I_j(\bt)}.
\]
Therefore, instead of directly testing $\tilde{\mathcal{H}}_{0,\bt}$ through a joint permutation over all induced segments, one may test exchangeability separately within each $I_j(\bt)$ and then combine the resulting $p$-values to obtain a single valid $p$-value for $\tilde{\mathcal{H}}_{0,\bt}$.

For any $1\le s<t\le n-1$, let $\mathcal{S}_{[s,t]}$ denote the set of all permutations of the indices in $[s,t]$. Given a score (that is preferably not symmetric in its arguments) $A_{s,t}:\mathcal{X}^{t-s+1}\to\mathbb{R}$, we may test $\mathcal{H}^{\mathrm{ex}}_{0,[s,t]}$ using the conformal $p$-value
\[
p_{s,t}
:=
\frac{1}{|\mathcal{S}_{[s,t]}|}
\sum_{\pi\in\mathcal{S}_{[s,t]}}
\One{
A_{s,t}(X_{\pi(s)},\ldots,X_{\pi(t)})
\le
A_{s,t}(X_s,\ldots,X_t)
}.
\]
Under $\mathcal{H}^{\mathrm{ex}}_{0,[s,t]}$, $p_{s,t}$ is super-uniform. As before, this full-permutation $p$-value may be replaced by its Monte--Carlo analogue in practice.

While any suitable score $A_{s,t}$ can be used for this exchangeability test, a natural choice can often be obtained directly from the CPP scores developed for the original \clocc{} algorithm. In particular, many CPP scores admit an additive decomposition of the form
\[
S(\bx,\bt)
=
\sum_{i=1}^n S_i(\bx,\bt),
\]
where $S_i(\bx,\bt)$ denotes the contribution of the $i$th observation to the overall CPP score. The LLR scores introduced in the previous section admit exactly this form. Consequently, for testing exchangeability within an induced segment $I_j(\bt)$, a natural segmentwise score is obtained by restricting this sum to the corresponding indices, i.e., $\sum_{i=t_{j-1}+1}^{t_j} S_i(\bx,\bt)$.
Whenever this restricted score depends only on the observations within the corresponding interval, it can be used as $A_{s,t}$. Thus, \clocc-SEG can retain the same underlying score information while replacing the global permutation test by separate exchangeability tests on the constituent segments.

Now consider again $\bt=(t_1,\ldots,t_k)\in\mathcal{T}_k$. Under $\tilde{\mathcal{H}}_{0,\bt}$, the $k+1$ induced segments are mutually independent by \Cref{ass:exchangeability}. Hence, provided that the intervalwise $p$-values depend only on the observations within their respective intervals, the segmentwise $p$-values
\[
p_{t_0+1,t_1},\,
p_{t_1+1,t_2},\,
\ldots,\,
p_{t_k+1,t_{k+1}}
\]
are mutually independent under the null. We may therefore combine them using Fisher's rule and define
\begin{equation}
\label{eq:clocc_seg_pval}
p_{\bt}^{\mathrm{seg}}
:=
\overline F_{\chi^2_{2(k+1)}}
\left(
-2\sum_{j=1}^{k+1}
\log p_{t_{j-1}+1,t_j}
\right),
\end{equation}
where $\overline F_{\chi^2_d}(x):=\P(\chi^2_d\ge x)$ denotes the survival function of a chi-squared random variable with $d$ degrees of freedom. Consequently, $p_{\bt}^{\mathrm{seg}}$ is itself a valid $p$-value under $\tilde{\mathcal{H}}_{0,\bt}$.

We then proceed exactly as in the original \clocc{} construction by defining
\begin{equation}
\label{eq:clocc_seg_pk}
p_{(k)}^{\mathrm{seg}}
:=\max_{\bt\in\mathcal{T}_k}p_{\bt}^{\mathrm{seg}},\qquad
\hat L_{\alpha}^{\clocc\text{-SEG}}:=
\min\left\{k\in[n-1]:p_{(k)}^{\mathrm{seg}}>\alpha\right\}.
\end{equation}
The resulting procedure is summarized in \Cref{alg:clocc_seg}. By the same argument as in Theorem~\ref{thm:validity_clocc}, $\hat L_{\alpha}^{\clocc\text{-SEG}}$ is a valid \dflcb on $K$.

The key computational gain is that the expensive permutation step is now performed only at the interval level. Every segment appearing in any candidate changepoint vector is a contiguous interval $[s,t]\subseteq[n]$, and there are only $n(n+1)/2$ such intervals. Hence, the collection $\{p_{s,t}:1\le s\le t\le n\}$
can be computed and stored once and subsequently reused across all candidate segmentations. As a consequence, \clocc-SEG requires only $O(n^2)$ distinct intervalwise $p$-value computations, rather than a separate permutation computation for each of the $2^{n-1}$ candidate changepoint configurations.

Thus, \clocc-SEG retains the finite-sample validity of \clocc{} while reducing the number of distinct permutation $p$-value computations from exponential to quadratic in $n$.


\section{Experiments}\label{sec:experiments}
In this section, we evaluate the performance of the \clocc{} \dflcb in several synthetic and semi-synthetic experiments. For all experiments, we use the \clocc-split algorithm, with the $p$-values computed using the Monte--Carlo approximation in~\eqref{eq:mc_pvalue_conch_multi} with $M=200$. For all experiments in this section, we assume that all changepoints are known to lie on the prespecified grid
$\mathcal{G}:= 50\cdot \N$,
and restrict the candidate changepoint vectors accordingly, as described in Section~\ref{sec:clocc_seg}.

\subsection{Gaussian mean-shift experiment}
We start by evaluating the empirical coverage and tightness of the \clocc{} \dflcb in a Gaussian mean-shift setting. Specifically, we set the sample size to $n=1000$, and for a given $K$, there are changepoints at locations $1\le \xi_1<\ldots<\xi_K\le n-1$, sampled randomly from a pre-fixed grid of candidate changepoints $\mathcal{G}:=\{50,100,\ldots,950\}$. Moreover, for each $k\in\{1,\ldots,K+1\}$,
\[
X_{\xi_{k-1}+1},\ldots,X_{\xi_k}\overset{iid}{\sim} \mathcal{N}(\mu_k,1),
\]
where the segment means $\mu_k$ are chosen randomly from the grid $\{-1.75,-1,0,1,1.75\}$, while ensuring that adjacent segment means are different.

\paragraph{Experiment 1: Coverage and tightness in a hierarchical model.}
We vary $K\in \{2,3,4,5\}$ and, for each such choice, generate $\bX$ according to the Gaussian mean-shift model described above. For each run, we follow the interlaced sample-splitting approach in \clocc-split: the odd-indexed samples are used to compute the reference estimator $\hat{\bxi}$, and the segment densities are then learned either parametrically, assuming that the samples within each segment are generated from a Gaussian distribution with unknown mean and unit variance, or non-parametrically using kernel density estimation. The remaining samples are then used to compute the \clocc-MC $p$-values and obtain the final \dflcb.

We generate $25$ batches, each consisting of $25$ independently generated sequences $\bX$, and within each batch compute the empirical coverage, $\hat{\P}(K\ge \hat{L}(\bX))$, together with the empirical probabilities that $\hat L(\bX)$ equals $K$, $K-1$, $K-2$, $K+1$, or $K+2$. These later probabilities help us assess tightness of the resulting \dflcb.

Further, we repeat the same evaluation for a hierarchical model in which the changepoint count itself is random and, for each sequence, is sampled uniformly from $\{1,\ldots,10\}$.

Table~\ref{tab:clocc_coverage_tightness} reports the resulting empirical distribution of $\hat L_\alpha^{\clocc}$ across these different changepoint settings. As expected, the empirical coverage remains above the nominal level $1-\alpha=0.9$ in every setting, for both the parametric and nonparametric CPP scores. At the same time, the lower bound is quite tight: $\hat L_\alpha^{\clocc}$ equals the true changepoint count $K$ with high probability across all settings, and when it differs from $K$, the discrepancy is typically of one changepoint with very high probability. The results are similar for the hierarchical setting with random $K$.

\begin{table}[t]
\centering
\small
\setlength{\tabcolsep}{5.5pt}
\renewcommand{\arraystretch}{1.30}

\begin{tabular}{|l|c|c|c|c|c|}
\hline
& $K=2$ & $K=3$ & $K=4$ & $K=5$ & Random $K$ \\
\hline
\multicolumn{6}{|l|}{\textbf{Parametric CPP score}} \\
\hline
$\P(\hat L_\alpha\le K)$
& 0.971 (0.006) & 0.969 (0.008) & 0.984 (0.006) & 1.000 (0.000) & 0.980 (0.008) \\
\hline
$\P(\hat L_\alpha=K)$
& 0.955 (0.008) & 0.944 (0.010) & 0.945 (0.008) & 0.925 (0.009) & 0.933 (0.009) \\
\hline
$\P(\hat L_\alpha=K-1)$
& 0.016 (0.005) & 0.024 (0.005) & 0.037 (0.006) & 0.060 (0.008) & 0.044 (0.008) \\
\hline
$\P(\hat L_\alpha=K-2)$
& 0.000 (0.000) & 0.001 (0.001) & 0.001 (0.001) & 0.015 (0.005) & 0.003 (0.002) \\
\hline
$\P(\hat L_\alpha=K+1)$
& 0.029 (0.006) & 0.029 (0.008) & 0.016 (0.006) & 0.000 (0.000) & 0.019 (0.008) \\
\hline
$\P(\hat L_\alpha=K+2)$
& 0.000 (0.000) & 0.001 (0.001) & 0.000 (0.000) & 0.000 (0.000) & 0.001 (0.001) \\
\hline\hline
\multicolumn{6}{|l|}{\textbf{Nonparametric CPP score}} \\
\hline
$\P(\hat L_\alpha\le K)$
& 0.971 (0.006) & 0.967 (0.008) & 0.981 (0.005) & 1.000 (0.000) & 0.980 (0.007) \\
\hline
$\P(\hat L_\alpha=K)$
& 0.948 (0.010) & 0.931 (0.010) & 0.937 (0.009) & 0.920 (0.011) & 0.925 (0.009) \\
\hline
$\P(\hat L_\alpha=K-1)$
& 0.021 (0.006) & 0.035 (0.006) & 0.040 (0.006) & 0.065 (0.009) & 0.051 (0.007) \\
\hline
$\P(\hat L_\alpha=K-2)$
& 0.001 (0.001) & 0.001 (0.001) & 0.004 (0.002) & 0.013 (0.005) & 0.004 (0.002) \\
\hline
$\P(\hat L_\alpha=K+1)$
& 0.029 (0.006) & 0.032 (0.008) & 0.019 (0.005) & 0.000 (0.000) & 0.019 (0.006) \\
\hline
$\P(\hat L_\alpha=K+2)$
& 0.000 (0.000) & 0.001 (0.001) & 0.000 (0.000) & 0.000 (0.000) & 0.001 (0.001) \\
\hline
\end{tabular}

\vspace{0.7em}
\caption{Empirical distribution of $\hat L_\alpha^{\clocc}$ across different changepoint settings. Each entry reports the empirical probability, with its standard error in parentheses. The first row within each CPP-score setting reports the empirical coverage probability $\P(\hat L_\alpha^{\clocc}\le K)$, which remains above the target level $1-\alpha=0.9$ across all settings. The remaining rows describe the tightness of the lower bound. In particular, $\hat L_\alpha^{\clocc}$ equals the true changepoint count $K$ with high probability, while deviations from $K$ are typically small.}
\label{tab:clocc_coverage_tightness}
\end{table}


\paragraph{Experiment 2: Effect of signal strength on tightness.}
Next, we study how the tightness of the \clocc{} \dflcb varies with the strength of the distributional changes. We fix $K=6$ and the changepoint locations at $\{100, 300, 400, 500, 700, 900\}$ throughout the experiment. Starting from $\mu_1=1$, the successive segment means are generated according to
\[
\mu_{j+1}=\mu_j+\delta S_jA_j,\qquad j\in[K],
\]
where $S_j$ is sampled uniformly from $\{-1,+1\}$ and $A_j\sim\mathrm{Unif}(0.8,1.2)$ independently. In other words, each adjacent mean is equally likely to be smaller or larger than the previous one, with the magnitude of the change being approximately $\delta$.
We vary $\delta$ over a range of signal strengths in $\{0.1, 0.25, \ldots, 1.15, 1.30\}$ and follow the same experimental design as in Experiment~1 to evaluate \clocc. Figure~\ref{fig:clocc_signal_strength} reports the empirical coverage and the average tightness gap $K-\hat L_\alpha$ as functions of $\delta$. While coverage remains controlled across signal strengths, the lower confidence bound becomes progressively looser as $\delta$ decreases and the adjacent segments become harder to distinguish.

\begin{figure}[!h]
    \centering
    \includegraphics[width=\textwidth]{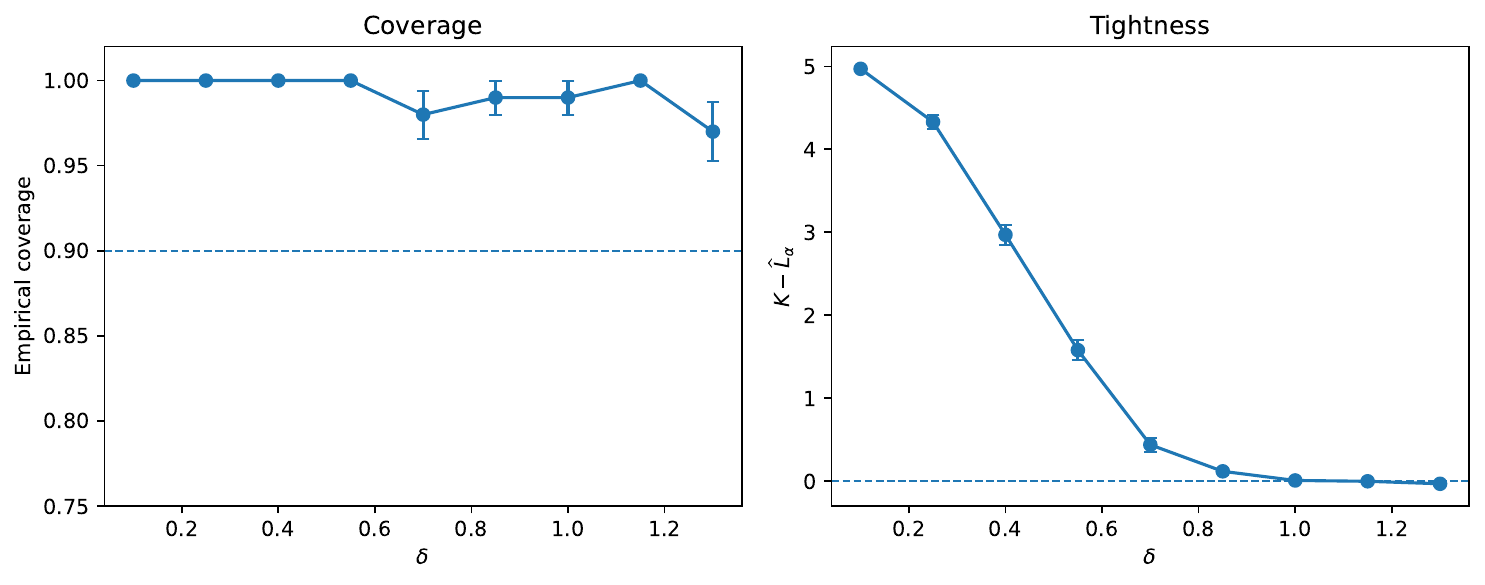}
    \caption{Separation between segment distributions dictates tightness of \clocc.
    Left: empirical coverage of the lower confidence bound, with the dashed horizontal line indicating the nominal level $1-\alpha=0.9$. Right: average tightness gap $K-\hat L_\alpha$. As $\delta$ increases, the lower confidence bound becomes progressively tighter and approaches the true changepoint count.}
    \label{fig:clocc_signal_strength}
\end{figure}

\subsection{MNIST digit-shift experiment}
Next, we consider a semi-synthetic multiple-changepoint setting based on images of handwritten digits from the MNIST dataset. We generate data sequences of length $n=1000$, with changepoints fixed at $\bxi=(200,400,600,800)$, so that the changepoint count is $K=4$. The five successive segments consist of i.i.d.\ samples from digits $1,7,3,8$, and $5$, respectively. The left panel of Figure~\ref{fig:mnist_clocc} illustrates one such data sequence, showing each of the four digit shifts.

Analogous to the previous Gaussian mean-shift setting, we use an interlaced sample split: the odd-indexed observations are first used to learn a reference estimator using kernel changepoint detection (KCPD), based on a $10$-dimensional representation learned from a neural network classifier. We then train another multiclass classifier to distinguish the estimated segments, and the logits of this classifier are used to construct the LLR scores. Finally, we apply \clocc-MC with confidence level $1-\alpha=0.9$ to compute the \dflcb. In the right panel of Figure~\ref{fig:mnist_clocc}, we report a bar plot of the resulting \dflcb $\hat L_\alpha$. \clocc{} successfully lower bounds the true changepoint count $K=4$ with probability $0.98$ and, in fact, takes the value $4$ exactly most of the time. This experiment illustrates the applicability of \clocc{} to generic high-dimensional data spaces, such as images.

\begin{figure}[!h]
    \centering
    \includegraphics[width=\textwidth]{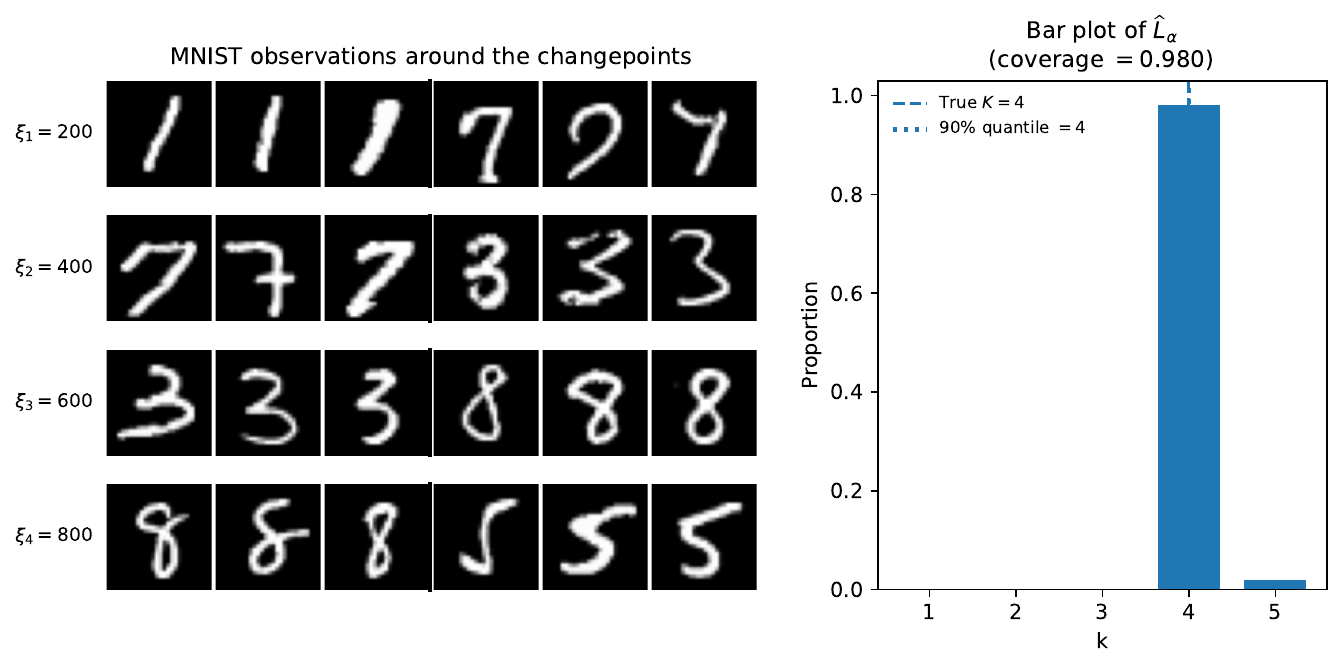}
    \caption{\clocc{} gives a tight lower bound on $K$ for the MNIST digit-shift experiment. On the left, we show representative samples immediately before and after each of the four true changepoints. On the right, we show a bar plot of the lower confidence bound $\hat L_\alpha$ over repeated experiments.}
    \label{fig:mnist_clocc}
\end{figure}

\subsection{SST-2 sentiment-shift experiment}
Finally, we consider a semi-synthetic changepoint experiment based on the SST-2 sentiment dataset, which consists of human-written reviews of books, movies, etc. Each observation here is high-dimensional text data. We generate data sequences of length $n=1500$ with changepoints at $(300,600,900,1200)$, so that there are $K=4$ true changepoints. Within each segment, observations are sampled i.i.d.\ from a mixture of positive and negative reviews. Across the five successive segments, the proportion of positive sentiment, denoted by $\delta$, takes the values
\[
(0.70,\;0.50,\;0.65,\;0.45,\;0.60),
\]
respectively. We follow the same strategy as in the earlier MNIST experiment to implement \clocc-split with Monte--Carlo $p$-values. The only change is that, to learn the reference estimator $\hat{\bxi}$ from the odd-indexed observations, we apply the KCPD algorithm to a two-dimensional representation of each review obtained from a pretrained DistilBERT sentiment classifier, which is well suited to textual data. Figure~\ref{fig:sst2_clocc} shows a bar plot of $\hat L_\alpha$. In contrast to MNIST, the neighboring regimes here have substantial overlap in their sentiment distributions, making this experiment more challenging. Nevertheless, the \clocc{} \dflcb remains tight.

\begin{figure}[!h]
    \centering
    \includegraphics[width=0.75\textwidth]{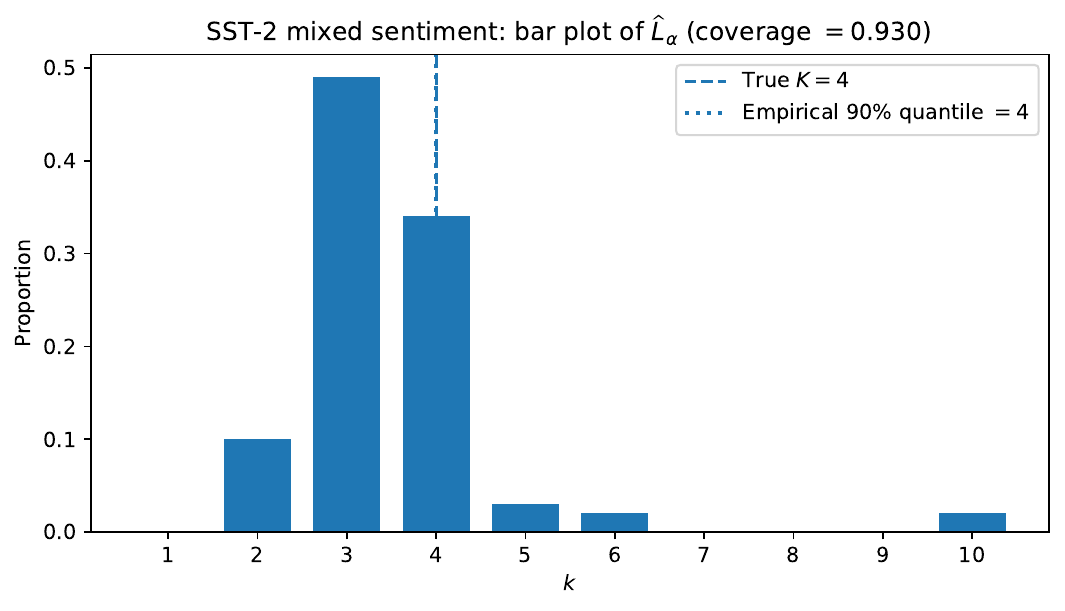}
    \caption{Bar plot of the \clocc{} \dflcb in the SST-2 mixed-sentiment experiment: despite challenging sentiment shifts, that are hard to detect, the \dflcb takes values $3$ and $4$ frequently, giving a tight lower bound.}
    \label{fig:sst2_clocc}
\end{figure}

\section{Discussion}\label{sec:discussion}

In this work, we have established that any valid \dfucb on $K$ is trivial, and have proposed the \clocc{} algorithm to construct a valid \dflcb on the same. The \clocc{} algorithm offers a flexible framework to the analyst: by choosing the CPP score efficiently, one can obtain tight lower bounds. We have also provided several practical variants of the \clocc{} framework that are both computationally and statistically efficient.

Since the \clocc{} framework proceeds by defining $p$-values $p_\bt$ for each candidate $\bt$, we also obtain, as a by-product, a valid confidence set for the changepoint locations, namely $\bar{C}_{1-\alpha}^{\conch-\text{multi}}$. This may be of independent interest in several applications.

An important future direction would be to allow temporal dependence in the data sequence and construct valid lower bounds for real-world dependent data sequences, which will help enhance the practicality of our approach.

\subsection*{Acknowledgment}
An AI model was used during the preparation of this manuscript to assist with language editing, presentation, and identifying potential mathematical inconsistencies. The authors have carefully reviewed the manuscript and take full responsibility for its technical content and conclusions.

\bibliographystyle{chicago}
\bibliography{bibliography}

\newpage
\appendix

\IncMargin{1.2em}
\begin{algorithm}[!h]
    \caption{\clocc-exact: \clocc{} with randomized exact $p$-values}
    \label{alg:clocc_exact}
    \KwIn{$(X_t)_{t=1}^n$ (data); $1-\alpha$ (target coverage);
    $S:\mathcal X^n\times\bigcup_{k=0}^{n-1}\mathcal T_k\to\overline{\R}$ (CPP score)}
    \KwOut{$\hat L_{\alpha}^{\clocc\text{-exact}}$}

    $\hat L_{\alpha}^{\clocc\text{-exact}}\gets n-1$\;

    \For{$k\in\{0,\ldots,n-1\}$}{
        $\bar p_{(k)}\gets 0$\;

        \ForEach{$\bt\in\mathcal T_k$}{
            Construct the split-permutation group $\Pi_{\bt}$\;
            Sample $U_{\bt}\sim\mathrm{Unif}[0,1]$ independently of the data\;

            $\bar p_{\bt}
            \gets
            \frac{1}{|\Pi_{\bt}|}
            \left[
            \sum_{\pi\in\Pi_{\bt}}
            \One{S(\pi(\bX),\bt)<S(\bX,\bt)}
            +
            U_{\bt}
            \sum_{\pi\in\Pi_{\bt}}
            \One{S(\pi(\bX),\bt)=S(\bX,\bt)}
            \right]$\;

            $\bar p_{(k)}
            \gets
            \max\{\bar p_{(k)},\bar p_{\bt}\}$\;
        }

        \If{$\bar p_{(k)}>\alpha$}{
            $\hat L_{\alpha}^{\clocc\text{-exact}}\gets k$\;
            \Return{$\hat L_{\alpha}^{\clocc\text{-exact}}$}\;
        }
    }

    \Return{$\hat L_{\alpha}^{\clocc\text{-exact}}$}\;
\end{algorithm}
\DecMargin{1.2em}

\IncMargin{1.2em}
\begin{algorithm}[!h]
    \caption{\clocc-MC: \clocc{} with random permutations}
    \label{alg:clocc_MC}
    \KwIn{$(X_t)_{t=1}^n$ (data); $1-\alpha$ (target coverage);
    $M$ (number of permutations);
    $S:\mathcal X^n\times\bigcup_{k=0}^{n-1}\mathcal T_k\to\overline{\R}$ (CPP score)}
    \KwOut{$\hat L_{\alpha}^{\clocc\text{-MC}}$}

    \For{$k\in\{0,\ldots,n-1\}$}{
        $\hat p_{(k)}\gets 0$\;

        \ForEach{$\bt\in\mathcal T_k$}{
            Construct the split-permutation group $\Pi_{\bt}$\;

            \For{$m\in[M]$}{
                Sample
                $\pi^{(m)}\sim\mathrm{Unif}(\Pi_{\bt})$\;
                Evaluate
                $S(\pi^{(m)}(\bX),\bt)$\;
            }

            $\hat p_{\bt}
            \gets
            \frac{
            1+
            \sum_{m=1}^M
            \One{
                S(\pi^{(m)}(\bX),\bt)
                \le
                S(\bX,\bt)
            }
            }{M+1}$\;

            $\hat p_{(k)}
            \gets
            \max\{\hat p_{(k)},\hat p_{\bt}\}$\;
        }

        \If{$\hat p_{(k)}>\alpha$}{
            $\hat L_{\alpha}^{\clocc\text{-MC}}\gets k$\;
            \Return{$\hat L_{\alpha}^{\clocc\text{-MC}}$}\;
        }
    }
\end{algorithm}
\DecMargin{1.2em}

\IncMargin{1.2em}
\begin{algorithm}[!h]
    \caption{\clocc-split: sample-split \clocc{}}
    \label{alg:clocc_split}
    \KwIn{$(X_t)_{t=1}^n$ (data); $1-\alpha$ (target coverage);
    score-learning algorithm $\mathcal A$}
    \KwOut{$\hat L_{\alpha}^{\clocc\text{-split}}$}

    $\mathcal I_1
    \gets
    \{i\in[n]:i\text{ is odd}\}$,\,
    $\mathcal I_2
    \gets
    \{i\in[n]:i\text{ is even}\}$\;

    $\mathcal D_1
    \gets
    (X_i)_{i\in\mathcal I_1}$\;

    $\bY
    \gets
    (X_i)_{i\in\mathcal I_2}$,
    ordered according to the original timeline\;

    Learn
    $\hat S\gets\mathcal A(\mathcal D_1)$
    using only the training split $\mathcal D_1$\;

    Hold $\hat S$ fixed for the remainder of the algorithm\;

    Let $m\gets|\mathcal I_2|$\;

    \For{$k\in\{0,\ldots,m-1\}$}{
        $\hat p_{(k)}^{\mathrm{split}}\gets0$\;

        \ForEach{$\bt\in\mathcal T_k^{(m)}$}{
            Construct the split-permutation group
            $\Pi_{\bt}^{(m)}$\;

            $p_{\bt}^{\mathrm{split}}
            \gets
            \frac{1}{|\Pi_{\bt}^{(m)}|}
            \sum_{\pi\in\Pi_{\bt}^{(m)}}
            \One{
                \hat S(\pi(\bY),\bt)
                \le
                \hat S(\bY,\bt)
            }$\;

            $\hat p_{(k)}^{\mathrm{split}}
            \gets
            \max\{
                \hat p_{(k)}^{\mathrm{split}},
                p_{\bt}^{\mathrm{split}}
            \}$\;
        }

        \If{$\hat p_{(k)}^{\mathrm{split}}>\alpha$}{
            $\hat L_{\alpha}^{\clocc\text{-split}}\gets k$\;
            \Return{$\hat L_{\alpha}^{\clocc\text{-split}}$}\;
        }
    }
\end{algorithm}
\DecMargin{1.2em}

\IncMargin{1.2em}
\begin{algorithm}[t]
    \caption{Practical \clocc-split implementation using classification}
    \label{alg:clocc_split_classification}
    \KwIn{$(X_t)_{t=1}^n$ (data); $1-\alpha$ (target coverage);
    changepoint localization procedure $\mathcal A$;
    multiclass classification procedure $\mathcal C$}
    \KwOut{$\hat L_{\alpha}^{\clocc\text{-split}}$}

    Form the interlaced split $\mathcal D_1\gets (X_1,X_3,\ldots)$ and $\mathcal D_2\gets (X_2,X_4,\ldots)$
    
    Apply $\mathcal A$ to $\mathcal D_1$ to obtain a reference segmentation $\hat{\bxi}
    =(\hat\xi_1,\ldots,\hat\xi_{\hat K})$

    Assign each observation in $\mathcal D_1$ the segment label induced by
    $\hat{\bxi}\;$

    Train the multiclass classifier $\mathcal C$ on $\mathcal D_1$ to predict
    these estimated segment labels

    From the fitted classifier, construct functions
    $\hat r_1,\ldots,\hat r_{\hat K+1}$ satisfying
    \[
    \hat r_1(x)\equiv0,
    \qquad
    \hat r_j(x)\approx
    \log\frac{f_j(x)}{f_1(x)},
    \quad j=2,\ldots,\hat K+1,
    \]
    using the classifier logits or log-probabilities

    Map the reference segmentation $\hat{\bxi}$ from $\mathcal D_1$
    to the corresponding locations
    $0=\hat\xi^{(2)}_0<\hat\xi^{(2)}_1<\cdots<
    \hat\xi^{(2)}_{\hat K}<\hat\xi^{(2)}_{\hat K+1}=|\mathcal D_2|$
    on the calibration split

    Define the frozen classifier-based CPP score on
    $\mathbf{y}=(y_1,\ldots,y_{|\mathcal D_2|})$ by
    \[
    \hat S(\mathbf{y})
    \gets \sum_{j=1}^{\hat K+1} -\left(
    \sum_{i=\hat\xi^{(2)}_{j-1}+1}^{\hat\xi^{(2)}_j}
    \hat r_j(y_i)\right)\;
    \]

    Hold $\hat S$ fixed and run \clocc{} on $\mathcal D_2$ using
    $\hat S$ as the CPP score

    \Return{$\hat L_{\alpha}^{\clocc\text{-split}}$}
\end{algorithm}
\DecMargin{1.2em}

\IncMargin{1.2em}
\begin{algorithm}[t]
    \caption{\clocc-SEG: \clocc{} with segmentwise test}
    \label{alg:clocc_seg}
    \KwIn{$(X_t)_{t=1}^n$ (data); $1-\alpha$ (target coverage);
    intervalwise score functions
    $\{A_{s,t}:1\le s<t\le n\}$}
    \KwOut{$\hat L_{\alpha}^{\clocc\text{-SEG}}$}

    \For{$1\le s<t\le n$}{
        Let $\mathcal S_{[s,t]}$ denote the set of all permutations
        of the indices in $[s,t]$\;
        Compute the intervalwise $p$-value
        \[
        p_{s,t}\gets
        \frac{1}{|\mathcal S_{[s,t]}|}
        \sum_{\pi\in\mathcal S_{[s,t]}}
        \One{
        A_{s,t}(X_{\pi(s)},\ldots,X_{\pi(t)})
        \le
        A_{s,t}(X_s,\ldots,X_t)
        }\;.
        \]
    }
    Set $p_{s,s}\gets1$ for all $s\in[n]$\;

    \For{$k\in\{0,\ldots,n-1\}$}{
        $p_{(k)}^{\mathrm{seg}}\gets0$\;

        \ForEach{$\bt=(t_1,\ldots,t_k)\in\mathcal T_k$}{
            Set $t_0\gets0$ and $t_{k+1}\gets n$\;

            $p_{\bt}^{\mathrm{seg}}
            \gets
            \overline F_{\chi^2_{2(k+1)}}
            \left(
            -2
            \sum_{j=1}^{k+1}
            \log
            p_{t_{j-1}+1,t_j}
            \right)$\;

            $p_{(k)}^{\mathrm{seg}}
            \gets
            \max\{
                p_{(k)}^{\mathrm{seg}},
                p_{\bt}^{\mathrm{seg}}
            \}$\;
        }

        \If{$p_{(k)}^{\mathrm{seg}}>\alpha$}{
            $\hat L_{\alpha}^{\clocc\text{-SEG}}\gets k$\;
            \Return{$\hat L_{\alpha}^{\clocc\text{-SEG}}$}\;
        }
    }
\end{algorithm}
\DecMargin{1.2em}

\section{Proofs}
\subsection{Proof of Theorem~\ref{thm:ucb_impossibility}}
Fix $x\in \Xcal^n$ and $\varepsilon>0$. Then, we can define distributions $P_{1,\varepsilon},\ldots,P_{n,\varepsilon}$
on the space $\mathcal{X}$ such that $P_{i,\varepsilon}\neq P_{i+1,\varepsilon}$ for all $i\in [n-1]$ and that
\[
\P_{X\sim P_{i,\varepsilon}}(X=x_i)\ge 1-\varepsilon/n, \qquad \text{for all~} i\in [n].
\]

These distributions can be constructed as follows. For each $i\in [n-1]$, choose some $x'_i\in \Xcal$ such that $x'_i\neq x_i$. This exists since $\Xcal$ contains at least two measurable points. Now, let $\eta_1\neq \ldots\neq \eta_n$ be distinct numbers in $(0,\min\{\varepsilon/n,1/3\})$ and define
\[
P_{i,\varepsilon}:=(1-\eta_i)\delta_{x_i}+\eta_i\delta_{x_i'}.
\]
Here $\delta_x$ denotes the point mass at $x\in\Xcal$. Then we have 
$\P_{X\sim P_{i,\varepsilon}}(X=x_i)\ge 1-\varepsilon/n$ for every $i\in [n]$. Moreover, if $x_i=x_{i+1}$, $P_{i,\varepsilon}\neq P_{i+1,\varepsilon}$ since $\eta_i\neq\eta_{i+1}$. On the other hand, if $x_i\neq x_{i+1}$, they are distinct since each distribution places more than $2/3$ of its mass on a different point. Thus, $P_{i,\varepsilon}\neq P_{i+1,\varepsilon}$ for every $i\in[n-1]$, as required.

By construction, $P_{\varepsilon}=\prod_{i=1}^n P_{i,\varepsilon}\in \cup_{k=1}^{n-1}\mathfrak{P}_{k}$ satisfies Assumption~\ref{ass:exchangeability} with $n-1$ changepoints. Therefore, by the theorem hypothesis, we get that 
\[
\P_{\bX\sim P_\varepsilon}(\hat{U}_\alpha(\bX,\zeta)\ge n-1)\ge 1-\alpha.
\]
Further, under $P_\varepsilon$, $\bX$ takes the value $\bx$ with probability at least $(1-\varepsilon/n)^n\ge 1-\varepsilon$ and recall that $\hat{U}_\alpha\in \{1,\ldots,n-1\}$. Consequently, 
\[
\P(\hat{U}_\alpha(\bx,\zeta)= n-1)\ge \frac{1-\alpha-\varepsilon}{1-\varepsilon}.
\]
Since this is true for any $\varepsilon>0$, taking $\varepsilon\to 0$ proves the result.

$\hfill\square$
\subsection{Proof of results from Section~\ref{sec:method}}

\subsubsection{Proof of Lemma~\ref{lem:validity_of_conch_multi}}

The proof follows analogous to the proof of Theorem~3.1 in \cite{hore2026conformal}. We include it here for completeness.

Fix $\bt\in \cup_{k=1}^{n-1}\mathcal{T}_k$. By definition, under the null $\tilde{\Hcal}_{0,\bt}$, $\pi(\bX)\overset{d}{=}\bX$ for any $\pi\in \Pi_{\bt}$. We start by defining a function $p_{\bt}:\Xcal^n\to [0,1]$ by
\[
p_{\bt}(\bx):=\frac{1}{|\Pi_{\bt}|}\sum_{\pi\in \Pi_{\bt}}\One{S(\pi(\bx),\bt)\leq S(\bx,\bt)},
\]
and note that $p_{\bt}\equiv p_{\bt}(\bX)$. Therefore, it follows that
\begin{align*}
    \P_{\bt}\left(p_{\bt}(\bX)\leq \alpha\right)&=\frac{1}{|\Pi_{\bt}|}\sum_{\pi\in \Pi_{\bt}} \P_{\bt}\left(p_{\bt}(\pi(\bX))\leq \alpha\right)\\
    &=\E_{\bt}\left[\frac{1}{|\Pi_{\bt}|}\sum_{\pi\in \Pi_{\bt}}\One{p_{\bt}(\pi(\bX))\leq \alpha}\right]\\
    &=\E_{\bt}\left[\frac{1}{|\Pi_{\bt}|}\sum_{\pi\in \Pi_{\bt}}\One{\frac{1}{|\Pi_{\bt}|}\sum_{\pi^\prime\in \Pi_{\bt}} \One{S(\pi^\prime(\bX), {\bt})\leq S(\pi(\bX),{\bt})}\leq \alpha}\right]\leq \alpha,
\end{align*}
where the penultimate step follows by noting that $\pi\circ \Pi_{\bt}=\Pi_{\bt}$, and the last step is a deterministic inequality. 
$\hfill\square$

\subsubsection{Proof of Theorem~\ref{thm:validity_clocc}: validity of \clocc}
In Lemma~\ref{lem:validity_of_conch_multi}, we have proved that under the null $\tilde{H}_{0,\bt}$, $p_\bt$ is a valid $p$-value. Further, we recall that $\Hcal_{0,k}=\bigcup_{\bt\in I_k}\tilde{\Hcal}_{0,\bt}$, and that $p_{(k)}=\max_{\bt\in I_k} p_{\bt}$.

Therefore, it immediately follows that $\P_k(p_{(k)}\le \alpha)\le \alpha$.
This completes the proof. 

$\hfill\square$
\subsubsection{Proof of Theorem~\ref{thm:universality_clocc}: universality result}\label{app:proof of universality theorem}

Fix $n\in \N$ and suppose we observe $\bX=(X_1,\ldots,X_n)$. First, given a valid \dflcb $L$, we consider the score
\[
S(\bx,\bt) = \One{\mathrm{len}(t)\ge  L(\bx)} \in \{0,1\},
\]
for any $t \in \cup_{k=0}^{n-1}\mathcal{T}_k$, where for any vector $\bt$, we write $\mathrm{len}(\bt)$ to denote the length of the vector $\bt$. Here, we will interpret $\mathrm{len}(\emptyset)$ as $0$. Let $\hat{L}_\alpha^{
\clocc}(\bX)$ be the \dflcb returned by the \clocc{} algorithm run with the data $\bX$ and the score $S(\bx,\bt)$.
We will show that the $\hat{L}_\alpha^{
\clocc}(\bX)=L(\bX)$.

We start with showing that $\hat{L}_\alpha^{
\clocc}(\bX) \le L(\bX)$, that is, if $k = L(\bX)$, then we show that $p_{(k)} > \alpha$, where $p_{(k)}$ is as defined in \eqref{eq:pvalue_clocc}.
This is immediate by observing that if $k = L(\bX)$, then $S(\bX,\bt) = 1$ for every $\bt\in \cup_{k=1}^{n-1}\mathcal{T}_k$, and consequently, 
\[
p_{\bt} = \frac{1}{|\Pit|}\sum_{\pi \in \Pit} \One{S(\pi(\bX),\bt) \leq S(\bX,\bt)}
     = \frac{1}{|\Pit|}\sum_{\pi \in \Pit} \One{S(\pi(\bX),\bt) \leq 1} = 1.
\]
Consequently, $p_{(k)}=1$, which proves this part.

Next, we show the other side that $\hat{L}_\alpha^{
\clocc}(\bX) \ge L(\bX)$, i.e., for every $k<L(\bX)$, $p_{(k)} \leq \alpha$. If $L(\bX)=0$, the statement is vacuously true and the result follows. Therefore, in the following part, we assume that $L(X)>0$.
Fix any $k<L(\bX)$.
We first claim that for any tuple $t\in \mathcal{T}_k$ and any vector $\bx \in \mathcal{X}^n$,
\begin{equation}\label{eq:claim}
    \frac{1}{|\Pit|}\sum_{\pi \in \Pit} \One{k \ge L(\pi(\bx))} \geq 1 - \alpha.
\end{equation}

To prove this claim, fix any such tuple $t\in \cup_{k=0}^{n-1}\mathcal{T}_k$, and sample $\pi$ uniformly from the set of permutations $\Pit$. Define $\tilde{\bX} := (\tilde{X}_1, \ldots, \tilde{X}_n) := \pi(\bx)$. Conditional on the multisets $\{x_{t_{j-1}+1}, \ldots, x_{t_{j}}\}$ for $j=1,\ldots, k+1$, we have that 
$\tilde{\bX}$ is exchangeable within each of the $k$ segments. Merging some of these segments can retain exchangeability, and therefore $\tilde\bX$ must have at most $k$ changepoints.

Moreover, conditional on these multisets, the segments are also independent. Consequently,
\[
\P_{\pi \sim \text{Unif}(\Pit)}\big(k\ge L(\tilde{\bX}) \,\big|\, \mathrm{multisets}\big) \geq 1 - \alpha,
\]
or equivalently, \eqref{eq:claim} holds.

Returning to the main proof, observe that if $k< L(\bX)$, then $S(\bX,\bt) = 0$ for any such $\bt\in \mathcal{T}_k$. Consequently, for any such $\bt$,
\begin{align*}
p_{\bt} &= \frac{1}{|\Pit|}\sum_{\pi \in \Pit} \One{S(\pi(\bX),\bt) \leq S(\bX,\bt)} \\
    &= \frac{1}{|\Pit|}\sum_{\pi \in \Pit} \One{S(\pi(\bX),\bt) \leq 0}
     = \frac{1}{|\Pit|}\sum_{\pi \in \Pit} \One{k < L(\pi(\bX))} <\alpha,
\end{align*}
where the last step follows from \eqref{eq:claim}. Since this holds for any tuple $\bt\in \mathcal{T}_k$, we have that 
\[
p_{(k)}=\max_{\bt \in \mathcal{T}_k} p_{\bt}< \alpha.
\]
Since $k$ was chosen arbitrarily, 
this completes the proof. \hfill$\square$

\subsection{Proof of results from Section~\ref{sec:practical_clocc}}

\subsubsection{Proof of Lemma~\ref{lem:validity_of_conch_multi_exact}}
Let $F$ denote the distribution of $S(\pi(\bX),\bt)$ with $\pi \sim \textnormal{Unif}(\Pi_{\bt})$, conditional on the multisets 
\[
M_{j}=\{X_{t_{j-1}+1},\ldots,X_{t_j}\},\qquad j=1,\ldots, K-1.
\]
Equivalently, we can write that
\[
\bar{p}_t = \lim_{y \uparrow S(\bX,]bt)} F(y) + U \bigl(F(S(\bX,\bt)) - \lim_{y \uparrow S(\bX,\bt)} F(y)\bigr).
\]
Under $\tilde{\Hcal}_{0,t}$, we have $S(\bX,\bt) \overset{d}{=} S(\pi(\bX),\bt)$ conditional on $M_1,\ldots,M_{K+1}$. Hence, by \citet[Lemma~E.1]{dandapanthula2025conformal}, the $p$-value $\bar{p}_t$, conditional on $M_1,\ldots,M_{K+1}$, follows $\textnormal{Unif}[0,1]$ (see also \citealp{brockwell2007universal}). 
Therefore,
\[
\P_{\bt}(\bar{p}_{\bt} \le \alpha) 
= \E_{\bt}\!\left[\P_{\bt}\!\left(\bar{p}_{\bt} \le \alpha \mid M_1,\ldots,M_{K+1}\right)\right]
= \E_{\bt}[\alpha] = \alpha.
\]
This completes the proof.$\hfill\square$
\subsubsection{Proof of Theorem~\ref{thm:coverage-clocc_MC}}

Given permutations $\pi_{1,\bt},\ldots,\pi_{M,\bt}\in \Pi_{\bt}$, we start by defining the function
\[
\hat{p}_{\bt}(\bx;\pi_{1,\bt},\ldots,\pi_{M,\bt}):=\frac{1+\sum_{k=1}^M\One{S(\pi_{k,t}(\bx),\bt)\leq S(\bx,\bt)}}{1+M},
\]
Now, consider an independent draw $\pi_{0,\bt}\sim \textnormal{Unif}(\Pi_\bt)$ and note that with $\pi_{1,\bt},\ldots,\pi_{M,\bt}\overset{iid}{\sim}\textnormal{Unif}(\Pi_{\bt})$, we have that $(\pi_{1,\bt},\ldots,\pi_{M,\bt})\overset{d}{=}(\pi_{0,\bt}\circ\pi_{1,\bt},\ldots,\pi_{0,\bt}\circ\pi_{M,\bt})$.
Moreover, conditional on $\pi_{0,\bt},\pi_{1,\bt},\ldots,\pi_{M,\bt}$, $\bX\overset{d}{=}\pi_{0,\bt}(\bX)$ under the null $\tilde{\mathcal{H}}_{0,\bt}$. Consequently,
\begin{multline*}
    \hat{p}_{\bt}(\bX;\pi_{1,\bt},\ldots,\pi_{M,\bt})\overset{d}{=}\hat{p}_{\bt}(\bX;\pi_{0,\bt}\circ\pi_{1,\bt},\ldots,\pi_{0,\bt}\circ\pi_{M,\bt})\overset{d}{=}\hat{p}_{\bt}(\pi_{0,\bt}(\bX);\pi_{0,\bt}\circ\pi_{1,\bt},\ldots,\pi_{0,\bt}\circ\pi_{M,\bt}).
\end{multline*}
Finally, note that for $\hat{p}_{\bt}$, defined in \eqref{eq:mc_pvalue_conch_multi},  $\hat{p}_{\bt}\equiv \hat{p}_{\bt}(\bX;\pi_{1,\bt},\ldots,\pi_{M,\bt})$, and therefore,
\begin{align*}
 \hat{p}_{\bt}(\bX;\pi_{1,\bt},\ldots,\pi_{M,\bt})&\overset{d}{=}\hat{p}_{\bt}(\pi_{0,\bt}(\bX);\pi_{0,\bt}\circ\pi_{1,\bt},\ldots,\pi_{0,\bt}\circ\pi_{M,\bt})\\
    &=\frac{1+\sum_{k=1}^M\One{S(\pi_{k,\bt}(\bX),\bt)\leq S(\pi_{0,\bt}(\bX),\bt)}}{M+1}\\
    &=\frac{\sum_{k=0}^M\One{S(\pi_{k,\bt}(\bX),\bt)\leq S(\pi_{0,\bt}(\bX),\bt)}}{M+1},
    \end{align*}
i.e., the rank of $S(\pi_{0,\bt}(\bX),\bt)$ in the exchangeable collection \[\{S(\pi_{0,\bt}(\bX),\bt),S(\pi_{1,\bt}(\bX),\bt),\ldots, S(\pi_{M,\bt}(\bX),\bt)\}.\] Consequently, this gives us
\[
\P_{\bt}\left(\hat{p}_{\bt}=\hat{p}_{\bt}(\bX;\pi_{1,\bt},\ldots,\pi_{M,\bt})\leq \alpha\right)\leq \alpha.
\]
This proves the result. 
$\hfill\square $
    
\subsection{Proof of results from Section~\ref{sec:CPP_score}}

\subsubsection{Proof of Proposition~\ref{prop:score-properties}}
Fix $k\in \{1,\ldots,n-1\}$ and $t\in \mathcal{T}_k$. If $S$ satisfies $S(\cdot,\bt)=S(\pi(\cdot),\bt)$ for all $\pi\in \Pi_t$, then by \eqref{eq:pvalue_conch_multi} note that $p_\bt$ is identically equal to $1$. This proves the first part.

For the second part, fix $t\in \cup_{k=1}^{n-1}\mathcal{T}_k$. By \eqref{eq:pvalue_conch_multi},
\begin{align*}
    p_{\bt,1}&=\frac{1}{|\Pi_\bt|}\sum_{\pi\in \Pi_\bt} \One{S(\pi(\bX),\bt)\leq S(\bX,\bt)}\\
    p_{\bt,2}&=\frac{1}{|\Pi_\bt|}\sum_{\pi\in \Pi_\bt} \One{f(S(\pi(\bX),\bt))\leq f(S(\bX,\bt))}.
\end{align*}
Since by the hypothesis, $f$ is non-decreasing, $S(\pi(\bX),\bt)\leq S(\bX,\bt)$ implies $f(S(\pi(\bX),\bt))\leq f(S(\bX,\bt))$,
and therefore $p_{\bt,1}\leq p_{\bt,2}$. This holds for all $\bt\in \cup_{k=1}^{n-1}\mathcal{T}_k$, it further follows that $p_{(k),1}\le p_{(k),2}$ where we let for $k\in \{1,\ldots,n-1\}$
\[
p_{(k),1}=\max_{\bt\in \mathcal{T}_k} p_{\bt,1},\qquad p_{(k),2}=\max_{\bt\in \mathcal{T}_k} p_{\bt,2}
\]  
Consequently, it follows that $L_1\ge L_2$.
$\hfill\mathsf{\square}$

\subsubsection{Proof of Theorem~\ref{thm:optimal_score}}
We start by noting that for any strictly increasing function $f:\R\to\R$, by Proposition~\ref{prop:score-properties}~(ii),
\[
\bar{\Ccal}_{1-\alpha}^{\conch\text{-multi}}(f(S^{\mathrm{OPT}}))=\bar{\Ccal}_{1-\alpha}^{\conch\text{-multi}}(S^{\mathrm{OPT}}).
\]
Therefore, for the rest of the proof, we take $f$ as the identity function. Now, note that
\begin{align}\label{eq:length_as_a_sum}
    \E_{\tilde{\Hcal}_{0,\bxi}\,\cap\,\Pcal_{\mathrm{IID}}}\!\left[|\bar{\Ccal}_{1-\alpha}^{\conch\text{-multi}}(S^{\mathrm{OPT}})|\right]=\E_{\tilde{\Hcal}_{0,\bxi}\,\cap\,\Pcal_{\mathrm{IID}}}\!\left[\one{\bar{p}_{\bxi}> \alpha}\right]+\sum_{\bt\in \cup_{r=1}^{n-1}\mathcal{T}_r,\bt\neq \bxi}\E_{\tilde{\Hcal}_{0,\bxi}\,\cap\,\Pcal_{\mathrm{IID}}}\!\left[\one{\bar{p}_{\bt}> \alpha}\right].
\end{align}
Let $\phi_\bt(\bX;S):=\one{\bar{p}_{\bt}\le \alpha}$, where $\bar{p}_{\bt}$ be the conformal $p$-value~\eqref{eq:pvalue_conch_multi_exact} computed on the data sequence $\bX$ with CPP score $S$.

Now, fix such a candidate changepoint location $\bt\neq \bxi$.
Let us define the corresponding multisets
\[
M_{j}=\{X_{t_{j-1}+1},\ldots,X_{t_j}), \qquad j\in [K+1],
\]
Therefore, by~\eqref{eq:length_as_a_sum}, in order to minimize the length $|\bar{\Ccal}_{1-\alpha}^{\conch\text{-multi}}|$, it suffices to maximize
\[
\E_{\tilde{\Hcal}_{0,\bxi}\,\cap\,\Pcal_{\mathrm{IID}}}\!\left[\phi_\bt(\bX;S)\ \middle| \ M_1,\ldots,M_{k+1}\right]\]
for each $\bt\neq \bxi$ over all choices of CPP score $S$.
From Lemma~\ref{lem:validity_of_conch_multi_exact}, we know that $\bar{p}_{\bt}$ is exactly a uniform random variable, conditional on the multisets $M_1,\ldots,M_{k+1}$. Write $P^{[\bv]}$ to denote the distribution of $\bX$ conditional on multisets $M_1,\ldots,M_{k+1}$ under the null $\tilde\Hcal_{\bv}$ for any $\bv\in \cup_{r=1}^{n-1}\mathcal{T}_r$, and let $\Hcal'_{\bv}$ to hypothesize that 
\[
\bX\mid M_1, \ldots, M_{k+1}\sim P^{[\bv]}.
\]
With this notation, maximizing $\E_{\tilde{\Hcal}_{0,\bxi}\,\cap\,\Pcal_{\mathrm{IID}}}\!\left[\phi_\bt(\bX;S)\ \middle| \ M_1,\ldots,M_{k+1}\right]$ is equivalent to maximizing $\E_{\Hcal'_{\bxi}}\!\left[\phi_\bt(\bX;S)\right]$, or defining the optimal conformal score for testing $\Hcal'_{\bt}$ against $\Hcal'_{\bxi}$ under the aforementioned framework. Applying the following lemma then completes the proof.
\hfill$\square$

\begin{lemma}\label{lem:conformal_np_lemma_multi}
Fix $\bt=(t_{1},\ldots,t_{k}),\bv=(v_1,\ldots,v_{\ell})\in \cup_{r=1}^{n-1}\mathcal{T}_r$ with $\bt\neq \bv$. The power $\E_{\tilde\Hcal_{\bv}}[\phi_{\bt}(\bX;s)]$ is maximized by the score function
\[
s^\star(\bx) := \frac{
\prod_{j=1}^{k+1}\prod_{i=t_{j-1}+1}^{t_j}f_j(x_i)
}{
\prod_{j=1}^{\ell+1}\prod_{i=v_{j-1}+1}^{v_j}f_j(x_i)
}
\]
\end{lemma}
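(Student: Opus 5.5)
The plan is to reduce this to a Neyman--Pearson argument on the finite conditional sample space. Condition on the multisets $M_1,\ldots,M_{k+1}$ induced by $\bt$. Then $\bX$ lives on the orbit $\mathcal{O}:=\{\pi(\bx_0):\pi\in\Pit\}$ of the observed arrangement. Under the null $\tilde\Hcal_{\bt}$, the conditional law $P^{[\bt]}$ is uniform over the arrangements in $\Pit$: each $\pi\in\Pit$ receives mass $1/|\Pit|$. Under the alternative $\tilde\Hcal_{\bv}$ with the piecewise i.i.d.\ densities, the conditional law $P^{[\bv]}$ assigns to $\pi(\bx_0)$ mass proportional to $\prod_{j=1}^{\ell+1}\prod_{i=v_{j-1}+1}^{v_j}f_j(\pi(\bx_0)_i)$. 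The normalizing constant is a sum over the orbit, and that sum depends only on the multisets.

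The key observation concerns the numerator of $s^\star$, namely $\prod_{j}\prod_{i\in I_j(\bt)}f_j(x_i)$. This quantity is invariant under $\Pit$, so on the orbit it is constant. Hence $s^\star$ is a strictly decreasing function of the conditional likelihood ratio $dP^{[\bv]}/dP^{[\bt]}$.

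Next I would show that the randomized test $\phi_{\bt}(\bX;s)=\one{\bar p_{\bt}\le\alpha}$ always has conditional size exactly $\alpha$ under $P^{[\bt]}$. This follows from the argument of Lemma~\ref{lem:validity_of_conch_multi_exact}. Moreover, $\bar p_{\bt}$ rejects on the arrangements with the smallest values of $s$, with randomization at the boundary value. For $s=s^\star$, this rejection region is exactly the set of arrangements with the largest likelihood ratio $P^{[\bv]}/P^{[\bt]}$, with randomization on ties. So $\phi_{\bt}(\cdot;s^\star)$ is a Neyman--Pearson test of level $\alpha$ for $P^{[\bt]}$ versus $P^{[\bv]}$ on the finite space $\mathcal{O}$.

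Before invoking the Neyman--Pearson lemma I would check one technical point: the randomization $U$ gives a rejection probability that is linear in the boundary atom. Concretely, $\P(\bar p\le\alpha\mid \bX)$ equals $1$ on the strict lower tail and equals the correct fraction on the tie class. This is what makes the test exactly match the NP form. The Neyman--Pearson lemma then says that any other score $s$, whose test is also a level-$\alpha$ (randomized) test on $\mathcal{O}$, has conditional power no larger. Taking expectations over the multisets gives unconditional optimality.

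The main obstacle I anticipate is the orientation convention. The $p$-value rejects for small $S(\bX,\bt)$, while the displayed $s^\star$ has the null-invariant product in the numerator and the alternative likelihood in the denominator. Small $s^\star$ therefore corresponds to large alternative likelihood, and I need to verify carefully that this matches the stated ratio. A second subtlety is the direction of conditioning: the multisets are defined through $\bt$, whereas the alternative segmentation $\bv$ generally mixes these multisets. I would handle this by noting that $P^{[\bv]}$ is still a well-defined distribution on $\mathcal{O}$ with the stated proportional mass. The proportionality constant cancels in the ratio ordering, so it does not affect which arrangements are rejected.
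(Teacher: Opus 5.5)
Your proposal is correct and follows essentially the same route as the paper: condition on the multisets induced by $\bt$, note that on the $\Pit$-orbit the likelihood ratio $dP^{[\bv]}/dP^{[\bt]}$ is proportional to $s^\star(\bX)^{-1}$ (because the numerator of $s^\star$ is $\Pit$-invariant), and then combine the exact conditional size of the randomized conformal test with the Neyman--Pearson lemma, averaging over the multisets at the end. Your explicit check that $\phi_{\bt}(\cdot;s^\star)$ rejects for small $s^\star$ (large likelihood ratio), with randomization exactly on the boundary tie class, is cleaner than the paper's threshold step, which compares $s^\star(\bX)^{-1}$ with a quantile of $s^\star(\bX_\pi)$ and so blurs exactly the orientation issue you flagged.
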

\begin{proof}
In the above setup, we consider the following hypothesis testing problem:
\[
\Hcal^\prime_{0}:\bX\mid(M_1,\ldots,M_{k+1})\sim P^{[\bt]}
\quad \text{vs.} \quad
\Hcal^\prime_{1}:\bX\mid(M_1,\ldots,M_{k+1})\sim P^{[\bv]}.
\]

Given samples $\bX\in\Xcal^n$, observe that
\[
\frac{\mathsf{d}\bigl(P^{[\bv]}\bigr)}{\mathsf{d}\bigl(P^{[\bt]}\bigr)}(\bX)\propto 
\frac{\prod_{j=1}^{\ell+1}\prod_{i=v_{j-1}+1}^{v_j}f_j(x_i)}{\prod_{j=1}^{k+1}\prod_{i=t_{j-1}+1}^{t_j}f_j(x_i)}
= {s^\star(\bX)}^{-1}.
\]

By the Neyman--Pearson lemma \citep[Theorem~3.2.1~(ii)]{lehmann2005testing}, any test $\phi(\bX)$ that attains exact level $\alpha$ under $\Hcal_{0}^\prime$ and satisfies
\begin{equation}\label{eqn:NP_optimal_form}
    \phi(\bX)=
\begin{cases}
    1 & \text{if } {s^\star(\bX)}^{-1} > \tau_{\alpha},\\[4pt]
    0 & \text{if } {s^\star(\bX)}^{-1} < \tau_{\alpha},
\end{cases}
\end{equation}
for some threshold $\tau_{\alpha}\in \R$, is most powerful for testing $\Hcal_{0}^\prime$ against $\Hcal_{1}^\prime$.

We know that the test $\phi_\bt(\cdot;s)=\One{p_\bt(s)\leq \alpha}$ controls Type~I error exactly at level $\alpha$ under $\Hcal_{0}^\prime$ for any score function $s$. Therefore, to establish optimality of $s^\star$, it suffices to show that $\phi_\bt(\cdot;s^\star)$ has the form given in \eqref{eqn:NP_optimal_form}.

Let $\bX_{\pi}=\pi(\bX)$ for $\pi\sim \mathrm{Unif}(\Pi_\bt)$, and let $F_{s^\star(\bX_\pi)}$ denote the conditional cumulative distribution function of $s^\star(\bX_{\pi})$ given $\bX$. Define
\[
\tau_{\alpha}:=\inf\{y\in \R: F_{s^\star(\bX_\pi)}(y)\geq \alpha\}.
\]
By the definition of $\bar{p}_\bt$ in \eqref{eq:pvalue_conch_multi_exact}, we have
\begin{align*}
    {s^\star(\bX)}^{-1} < \tau_{\alpha} &\implies \bar{p}_\bt(s^\star)\leq \alpha,\\
    {s^\star(\bX)}^{-1} \ge \tau_{\alpha} &\implies \bar{p}_\bt(s^\star)> \alpha,
\end{align*}
which establishes the desired form. This completes the proof.
\end{proof}

\end{document}